\DeclareMathAlphabet{\pazocal}{OMS}{zplm}{m}{n}
\DeclareMathAlphabet\mathbfcal{OMS}{cmsy}{b}{n}
\newtheorem{theorem}{Theorem}
\newtheorem{definition}{Definition}
\newtheorem{lemma}{Lemma}
\newtheorem{proposition}{Proposition}
\providecommand{\nor}[1]{\ensuremath{\left\lVert {#1} \right\rVert}}
\providecommand{\scal}[2]{\ensuremath{\left\langle{#1},{#2}\right\rangle}}
\providecommand{\scalT}[2]{\ensuremath{\left\langle{#1},{#2}\right\rangle}}
\providecommand{\scal}[2]{\ensuremath{\left\langle{#1},{#2}\right\rangle}}
\def\bit{\begin{itemize}}
\def\eit{\end{itemize}}
\def\ben{\begin{enumerate}}
\def\een{\end{enumerate}}
\definecolor{dkgreen}{rgb}{0,0.6,0}
\definecolor{gray}{rgb}{0.5,0.5,0.5}
\definecolor{mauve}{rgb}{0.58,0,0.82}
\tiny\color{gray},
\title{Unbalanced Sobolev Descent}
\author{%
  Youssef Mroueh , Mattia Rigotti \\
  IBM Research AI \\
  \texttt{mroueh@us.ibm.com,mrg@zurich.ibm.com} \\
}
\begin{document}

\maketitle

\begin{abstract}
We introduce Unbalanced Sobolev Descent (USD), a particle descent algorithm for transporting a high dimensional source distribution to a target distribution that does not necessarily have the same mass. We define the \emph{Sobolev-Fisher} discrepancy between distributions and show that it relates to advection-reaction transport equations and the Wasserstein-Fisher-Rao metric between distributions. USD transports particles along gradient flows of the witness function of the Sobolev-Fisher discrepancy (advection step) and reweighs the mass of particles with respect to this witness function (reaction step). The reaction step can be thought of as a birth-death process of the particles with rate of growth proportional to the witness function. When the \emph{Sobolev-Fisher witness} function is estimated in a Reproducing Kernel Hilbert Space (RKHS), under mild assumptions we show that USD converges asymptotically (in the limit of infinite particles) to the target distribution in the Maximum Mean Discrepancy (MMD) sense. We then give two methods to estimate the \emph{Sobolev-Fisher witness} with neural networks, resulting in two Neural USD algorithms. The first one implements the reaction step with mirror descent on the weights, while the second implements it through a birth-death process of particles. We show on synthetic examples that USD transports distributions with or without conservation of mass faster than previous particle descent algorithms, and finally demonstrate its use for molecular biology analyses where our method is naturally suited to match developmental stages of populations of differentiating cells based on their single-cell RNA sequencing profile. Code is available at \url{http://github.com/ibm/usd}.
\end{abstract}

\section{Introduction}

Particle flows such as Stein Variational Gradient descent \citep{steindescent}, Sobolev descent \citep{SD} and MMD flows \citep{arbel2019maximum}, allow the transport of a source distribution to a target distribution, following paths that progressively decrease a discrepancy between distributions (Kernel Stein discrepancy and MMD, respectively).
Particle flows can be seen through the lens of Optimal Transport as gradient flows in the Wasserstein geometry \citep{santambrogio2017euclidean}, and they've been recently used to analyze the dynamics of gradient descent in over-parametrized neural networks in \cite{chizat2018global} and of Generative Adversarial Networks (GANs) training \citep{SD}.

Unbalanced Optimal Tansport \citep{chizat2015unbalanced, chizat2018interpolating, chizat2018scaling, sejourne2019sinkhorn} is a new twist on the classical Optimal Transport theory \citep{Villani}, where the total mass between source and target distributions may not be conserved.
The Wasserstein Fisher-Rao (WFR) distance introduced in \cite{chizat2018interpolating} gives a dynamic formulation similar to the so-called Benamou-Brenier dynamic form of the Wasserstein-$2$ distance \citep{dynamicTransport}, where the dynamics of the transport is governed by an advection term with a velocity field $V_t$ and a reaction term with a rate of growth $r_{t}$, corresponding to the construction and destruction of mass with the same rate: 
\begin{align}
 &\text{WFR}^2(p,q)= \inf_{q_t,V_t,r_t} \int_{0}^1 \int  (\nor{V_t(x)}^2 + \frac{\alpha}{2} r^2_t(x)) dq_{t}(x) dt\nonumber\\
 &\quad\text{ s.t } \frac{\partial q_t(x)}{\partial t}=-\text{div}(q_t(x) V_t(x))+\alpha~ r_{t}(x)q_{t}(x), \qquad q_0=q, q_1=p.
 \label{eq:BenamouWFR}
 \end{align}

From a particle flow point of view, this advection-reaction in Unbalanced Optimal Transport corresponds to processes of birth and death, where particles are created or killed in the transport from source to target.
Particle gradient descent using the WFR geometry have been used in the analysis of over-parameterized neural networks and implemented as Birth-Death processes in  \cite{deathbirthJoan} and as conic descent in \cite{chizat2019sparse}.
In the context of particles transportations, \cite{deathbirthLangevin} showed that birth and death processes can accelerate the Langevin diffusion. On the application side, Unbalanced Optimal Transport is a powerful tool in  biological modeling.
For instance, the trajectories of a tumor growth have been modeled in the WFR framework by \cite{chizat2017tumor}.
\cite{schiebinger2019optimal} and \cite{yang2018scalable} used Unbalanced Optimal Transport to find differentiation trajectories of cells during development.

%
The  dynamic formulation of WFR is challenging as it requires solving PDEs.
One can use the unbalanced Sinkhorn divergence and apply an Euler scheme to find the trajectories between source and target as done in \cite{feydy2018interpolating} but this does not give any convergence guarantees.

In this paper we take another approach similar to the one of Sobolev Descent \citep{SD}.
We introduce the Kernel Sobolev-Fisher discrepancy that is related to WFR and has the advantage of having a closed form solution.
We present a particle descent algorithm in the unbalanced case named \emph{Unbalanced Sobolev Descent} (USD) that consists of two steps: an advection step that uses the gradient flows of a witness function of the Sobolev-Fisher discrepancy, and  a reaction  step that reweighs the particles according to the witness function. 
We show theoretically that USD is convergent in the Maximum Mean Discrepancy sense (MMD), that the reaction step accelerates the convergence, in the sense that it results in strictly steeper descent directions, and give a variant where the witness function is efficiently estimated as a neural network.
We then empirically demonstrate the effectiveness and acceleration of USD in synthetic experiments, image color transfer tasks, and finally use it to model the developmental trajectories of populations of cells from single-cell RNA sequencing data \citep{schiebinger2019optimal}.

\section{Sobolev-Fisher Discrepancy}

In this Section we define the Sobolev-Fisher Discrepancy ($\text{SF}$) and show how it relates to  advection-reaction PDEs. While this formulation remains computationally challenging, we'll show in Section \ref{sec:KSFD} how to approximate it in RKHS.

\subsection{Advection-Reaction with no Conservation of Mass}

\begin{definition}[Sobolev-Fisher Discrepancy] Let $p,q$ be two measures defined on $\pazocal{X}\subset \mathbb{R}^d$. For $\alpha>0$, the Sobolev-Fisher Discrepancy is defined as follows:
$$ \text{SF}(p,q)=\sup_{f} \left\{\mathbb{E}_{x\sim p} f(x) - \mathbb{E}_{x\sim q} f(x) : \quad \mathbb{E}_{x\sim q} \nor{\nabla_x f(x)}^2 + \alpha \mathbb{E}_{x\sim q} f^2(x) \leq 1 , \quad f |_{\partial \pazocal{X}}=0 \right\}$$.  
\label{def:SF}
 \end{definition}
 \vskip -0.2 in
Note that the objective of  $\text{SF}$ is an Integral Probability Metric (IPM) objective, and the function space imposes a constraint on the weighted Sobolev norm of the witness function $f$ on the support of the distribution $q$.
We refer to $q$ as the source distribution and $p$ as the target distribution.
The following theorem relates the solution of the Sobolev-Fisher Discrepancy to an advection-reaction PDE: 
\begin{theorem}[Sobolev-Fisher Critic as Solution of an Advection-Reaction PDE]
Let $u$ be the solution of the \emph{advection-reaction} PDE:
$$p(x)-q(x) = -div (q(x) \nabla_x u(x))+ \alpha u(x)q(x), \quad u|_{\partial \pazocal{X}}=0.$$
Then 
$\text{SF}^2(p,q)= \mathbb{E}_{x\sim q} \nor{\nabla_x u(x)}^2 + \alpha \mathbb{E}_{x\sim q} u^2(x)$,
with witness function
$f^*_{p,q}=u/\text{SF}(p,q)$.
\label{theo:AdvectionReaction}
\end{theorem}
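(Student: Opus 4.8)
The plan is to solve the constrained optimization problem defining $\text{SF}(p,q)$ directly via Lagrange duality / a change of variables, and to recognize the stationarity condition as the stated PDE. First I would normalize: since the objective $\mathbb{E}_p f - \mathbb{E}_q f$ is linear in $f$ and the constraint set is a sublevel set of the quadratic form $Q(f) := \mathbb{E}_{x\sim q}\nor{\nabla_x f(x)}^2 + \alpha\,\mathbb{E}_{x\sim q} f^2(x)$, the supremum over $\{f : Q(f)\le 1,\ f|_{\partial\pazocal{X}}=0\}$ is attained on the boundary $Q(f)=1$, and by homogeneity the optimizer is $f^*_{p,q}=u/\sqrt{Q(u)}$ where $u$ maximizes the Rayleigh-type quotient $\big(\mathbb{E}_p f-\mathbb{E}_q f\big)^2 / Q(f)$ (equivalently, minimizes $\tfrac12 Q(f) - (\mathbb{E}_p f-\mathbb{E}_q f)$ after rescaling).

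Next I would write down the first-order optimality condition. Introduce the bilinear form $B(f,g) := \mathbb{E}_{x\sim q}\scal{\nabla_x f(x)}{\nabla_x g(x)} + \alpha\,\mathbb{E}_{x\sim q} f(x)g(x)$, so $Q(f)=B(f,f)$, and the linear functional $L(g) := \mathbb{E}_{x\sim p} g(x) - \mathbb{E}_{x\sim q} g(x)$. Stationarity of $\tfrac12 B(f,f) - L(f)$ (or, with a Lagrange multiplier $\lambda$, of $L(f) - \tfrac{\lambda}{2}B(f,f)$) gives the weak/variational identity $B(u,g) = L(g)$ for all admissible test functions $g$ vanishing on $\partial\pazocal{X}$. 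Integrating the gradient term by parts, $\mathbb{E}_{x\sim q}\scal{\nabla_x u}{\nabla_x g} = \int \scal{\nabla_x u(x)}{\nabla_x g(x)}\,q(x)\,dx = -\int g(x)\,\text{div}(q(x)\nabla_x u(x))\,dx$ (the boundary term drops because $g|_{\partial\pazocal{X}}=0$), so the identity reads $\int g(x)\big(-\text{div}(q(x)\nabla_x u(x)) + \alpha u(x) q(x)\big)dx = \int g(x)\big(p(x)-q(x)\big)dx$ for all such $g$; since $g$ is arbitrary this is exactly the advection-reaction PDE $p(x)-q(x) = -\text{div}(q(x)\nabla_x u(x)) + \alpha u(x)q(x)$ with $u|_{\partial\pazocal{X}}=0$.

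Finally I would compute the value. Testing the PDE against $u$ itself gives $L(u) = B(u,u) = Q(u)$, hence by the normalization $f^*_{p,q}=u/\sqrt{Q(u)}$ the optimal objective value is $L(f^*_{p,q}) = L(u)/\sqrt{Q(u)} = Q(u)/\sqrt{Q(u)} = \sqrt{Q(u)}$, so $\text{SF}^2(p,q) = Q(u) = \mathbb{E}_{x\sim q}\nor{\nabla_x u(x)}^2 + \alpha\,\mathbb{E}_{x\sim q} u^2(x)$, and $f^*_{p,q} = u/\text{SF}(p,q)$, as claimed. The quadratic constraint certifies optimality (the functional is concave on the constraint set), so the stationary point is the global maximizer.

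I expect the main obstacle to be the functional-analytic bookkeeping rather than the formal computation: one must specify the right weighted Sobolev space $H^1_0(q)$ in which $B$ is an inner product (coercivity requires $\alpha>0$, which is why the Fisher term is present and why no Poincaré inequality is needed), verify that $L$ is bounded on it so that Lax–Milgram yields existence and uniqueness of $u$, and justify the integration by parts and the density of smooth compactly-supported test functions — all of which need mild regularity assumptions on $q$ and $\pazocal{X}$ (e.g. $q$ bounded below on a nice bounded domain, or decaying suitably), which the statement implicitly assumes. The passage from the weak identity to the strong (pointwise) PDE likewise presupposes enough regularity of $q$ and $u$.
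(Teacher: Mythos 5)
Your proposal is correct in substance, but it runs in the opposite direction from the paper's proof and with heavier machinery. The paper treats the theorem as a pure verification problem: taking the PDE solution $u$ as given, it substitutes $p-q=-\mathrm{div}(q\nabla_x u)+\alpha u q$ into the objective $\int f\,(p-q)$, integrates by parts to obtain the inner product $\scalT{u}{f}_{W^2_0}=\mathbb{E}_{q}\scalT{\nabla_x u}{\nabla_x f}+\alpha\,\mathbb{E}_{q}\,u f$, bounds this by Cauchy--Schwarz by $\nor{u}_{W^2_0}$ for every feasible $f$, and checks that $f^{*}_{p,q}=u/\nor{u}_{W^2_0}$ is feasible and attains the bound; no existence, uniqueness, or KKT-sufficiency machinery is invoked. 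You instead derive the PDE as the Euler--Lagrange/KKT condition of the constrained problem, and therefore need two additional ingredients to identify the given $u$ with the optimizer: sufficiency of stationarity (convexity of the program) and uniqueness of the weak solution (Lax--Milgram), both of which you correctly flag rather than prove. Your route buys an explanation of where the advection-reaction PDE comes from, and existence of $u$ essentially for free, at the cost of more functional-analytic overhead; note that the cleanest way to discharge your final ``certification'' step is exactly the paper's two-line argument, since the weak identity $B(u,f)=L(f)$ combined with Cauchy--Schwarz gives $L(f)\le\sqrt{Q(u)}$ for every feasible $f$. Your value computation ($L(u)=Q(u)$, hence $\text{SF}^2(p,q)=Q(u)$ and $f^{*}_{p,q}=u/\text{SF}(p,q)$) coincides with the paper's.
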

From Theorem \ref{theo:AdvectionReaction} we see that the witness function of $\text{SF}^2$ solves an advection-reaction where the mass is transported from $q$  to $p$, via an advection term following the gradient flow of $\nabla_x u$, and a reaction term amounting to construction/destruction of mass that we also refer to as a birth-death process with a rate given by $u$.
Intuitively, if the witness function $u(x)>0$ we need to create mass, and destruct mass if $u(x)<0$. This is similar to the notion of particle birth and death defined in \cite{deathbirthJoan} and \cite{deathbirthLangevin}. 

In Proposition \ref{pro:UnconstForm} we give a convenient unconstrained  equivalent form for $\text{SF}^2$:
\begin{proposition}[Unconstrained Form of $\text{SF}^2$] $\text{SF}$ satisfies the expression:
$\text{SF}^2(p,q)= \sup_{u} L(u)$, with
$L(u)= 2( \mathbb{E}_{x\sim p} u(x) - \mathbb{E}_{x\sim q} u(x)) -\left( \mathbb{E}_{x\sim q} \nor{\nabla_x u(x)}^2 + \alpha \mathbb{E}_{x\sim q} u^2(x) \right).$
\label{pro:UnconstForm}
\end{proposition}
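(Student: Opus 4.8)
The plan is to exploit the homogeneity structure of $L$. Write $L(u) = 2A(u) - Q(u)$, where $A(u) := \mathbb{E}_{x\sim p} u(x) - \mathbb{E}_{x\sim q} u(x)$ is linear in $u$ (hence $1$-homogeneous) and $Q(u) := \mathbb{E}_{x\sim q}\nor{\nabla_x u(x)}^2 + \alpha\,\mathbb{E}_{x\sim q} u^2(x)$ is a nonnegative quadratic form (hence $2$-homogeneous). Crucially, the admissible class (functions vanishing on $\partial\pazocal{X}$) is invariant under the rescaling $u \mapsto t u$, $t\in\R$, and this is what makes the reduction to an unconstrained problem possible; the supremum in Definition \ref{def:SF} is $\text{SF}(p,q) = \sup\{A(u) : Q(u)\le 1,\ u|_{\partial\pazocal{X}}=0\}$.

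First I would split the optimization over admissible $u$ according to whether $Q(u)>0$. On the set $Q(u)>0$, I would reparametrize $u = t v$ with $v = u/\sqrt{Q(u)}$, so that $Q(v)=1$, $t=\sqrt{Q(u)}>0$; conversely every such pair yields an admissible $u$. Then $L(tv) = 2tA(v) - t^2$, and the one-dimensional identity $\sup_{t\in\R}(2tA(v) - t^2) = A(v)^2$ (attained at $t^\ast=A(v)$) gives $\sup\{L(u): Q(u)>0\} = \sup\{A(v)^2 : Q(v)=1\}$. Since $v$ and $-v$ are both admissible with the same $Q$-value, $\sup\{A(v) : Q(v)=1\} = \sup\{|A(v)| : Q(v)=1\}\ge 0$, and because $A$ is linear the constrained supremum is attained on the unit sphere, so $\sup\{A(v) : Q(v)\le 1\} = \sup\{A(v) : Q(v)=1\}$. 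Combining, $\sup\{L(u): Q(u)>0\} = \big(\sup\{A(v): Q(v)\le 1,\ v|_{\partial\pazocal{X}}=0\}\big)^2 = \text{SF}^2(p,q)$, matching Definition \ref{def:SF} including the boundary condition.

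It remains to check that admissible $u$ with $Q(u)=0$ do not change the supremum. Under the implicit nondegeneracy assumption that $q$ has full support on $\pazocal{X}$, $Q(u)=0$ forces $\nabla_x u \equiv 0$ $q$-a.e., hence $u$ constant on each connected component, and then $u|_{\partial\pazocal{X}}=0$ forces $u\equiv 0$, contributing only $L(0)=0$; so $\sup_u L(u) = \sup\{L(u): Q(u)>0\} = \text{SF}^2(p,q)$. (Absent nondegeneracy, if some admissible $u$ has $Q(u)=0$ but $A(u)\neq 0$, then $L(tu)=2tA(u)\to+\infty$; but the same $u$ is feasible at every scale in Definition \ref{def:SF}, so $\text{SF}(p,q)=+\infty$ too and the identity still holds in the extended reals.)

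The argument is elementary, and the only points needing care are the treatment of the degenerate directions $Q(v)=0$ and the observation that rescaling preserves $u|_{\partial\pazocal{X}}=0$ — the latter is immediate, the former is handled by the case split above. So the ``main obstacle'' is merely the bookkeeping needed to ensure that restricting to $Q(u)>0$ and to the unit sphere $Q(v)=1$ does not alter either supremum.
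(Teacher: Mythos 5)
Your proof is correct, but it takes a genuinely different route from the paper's. You exploit the homogeneity structure: the linear part $A(u)=\mathbb{E}_{x\sim p}u(x)-\mathbb{E}_{x\sim q}u(x)$ is $1$-homogeneous, the quadratic part $Q(u)=\mathbb{E}_{x\sim q}\nor{\nabla_x u(x)}^2+\alpha\,\mathbb{E}_{x\sim q}u^2(x)$ is $2$-homogeneous and nonnegative, and optimizing $L(tv)=2tA(v)-t^2$ along rays with $Q(v)=1$ gives $\sup_u L(u)=\bigl(\sup\{A(v):Q(v)\le 1\}\bigr)^2=\text{SF}^2(p,q)$; your case split for degenerate directions $Q(u)=0$ (where both sides can be $+\infty$) is careful bookkeeping the paper does not address. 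The paper instead relies on the solution $u^*$ of the advection-reaction PDE from Theorem \ref{theo:AdvectionReaction}: using the weak formulation $A(f)=\scalT{f}{u^*}_{W^2_0}$ one completes the square, $L(u)-L(u^*)=-\nor{u-u^*}^2_{W^2_0}\le 0$, so $u^*$ attains the supremum and $L(u^*)=\nor{u^*}^2_{W^2_0}=\text{SF}^2(p,q)$. What each approach buys: the paper's argument exhibits the maximizer explicitly as the Sobolev-Fisher critic (the object the descent algorithm actually uses later), but it presupposes existence of the PDE solution $u^*$; your scaling argument is more elementary and more general — it needs no existence of a maximizer or any PDE input, and it establishes equality of the optimal values even in the extended-real/degenerate setting — though it does not by itself identify the optimal $u$.
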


Theorem \ref{theo:KineticDeathBirth} gives a physical interpretation for $\text{SF}^2$ as finding the witness function $u$ that has minimum sum of  kinetic energy and rate of birth-death while transporting $q$ to $p$ via advection-reaction: 

\begin{theorem}[Kinetic Energy \& Birth-Death rates minimization]
Consider the following minimization:
$$P =\inf_{\substack{V: \pazocal{X}\to \mathbb{R}^d\\ r: \pazocal{X}\to\mathbb{R}}} \left\{\frac{1}{2}\left(\int_{\pazocal{X}} (\nor{V(x)}^2 +\alpha r^2(x))q(x)dx \right):
p(x) - q(x)= -div(q(x)V(x)) + \alpha r(x) q(x)\right\}$$
We then have that
$P=\frac{1}{2} \text{SF}^2(p,q)$, and moreover:
\begin{align*}
    \text{SF}^{ 2}(p,q)= \inf_{u} \int_{\pazocal{X}} \nor{\nabla_x u (x)}^2 q(x)dx + \alpha \int_{\pazocal{X}} u^2(x)q(x)dx, \\
    \text{subject to }  p(x) - q(x)= -div(q(x)\nabla_x u(x)) + \alpha u(x) q(x).
\end{align*}
\label{theo:KineticDeathBirth}
\end{theorem}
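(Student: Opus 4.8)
The plan is to recognize the minimization defining $P$ as a constrained quadratic problem whose value can be pinned down by weak duality against the PDE constraint — the static, one-step analogue of the Cauchy--Schwarz argument behind the Benamou--Brenier formula — sandwiching $P$ between $\tfrac12\,\text{SF}^2(p,q)$ from below (for every admissible pair) and from above (by exhibiting one optimal pair). For the lower bound I would fix an arbitrary admissible pair $(V,r)$, i.e.\ one with $p - q = -\text{div}(qV) + \alpha r q$ and enough regularity/decay that boundary terms are controlled, and pair the constraint against any test function $f$ with $f|_{\partial\pazocal X} = 0$. Integration by parts (the boundary term drops because $f$ vanishes on $\partial\pazocal X$) gives
$$\mathbb{E}_{x\sim p}f(x) - \mathbb{E}_{x\sim q}f(x) = \int_{\pazocal X}\big(\scal{\nabla_x f(x)}{V(x)} + \alpha\,f(x)r(x)\big)q(x)\,dx.$$
Reading the integrand as the pointwise inner product of the $(d{+}1)$-vectors $(\nabla_x f,\sqrt\alpha f)$ and $(V,\sqrt\alpha r)$ and applying Cauchy--Schwarz pointwise and then in $L^2(q)$ yields
$$\mathbb{E}_{x\sim p}f - \mathbb{E}_{x\sim q}f \;\le\; \sqrt{\mathbb{E}_{x\sim q}\nor{\nabla_x f}^2 + \alpha\,\mathbb{E}_{x\sim q}f^2}\,\cdot\,\sqrt{\int_{\pazocal X}\big(\nor{V}^2 + \alpha r^2\big)q\,dx}.$$
Restricting to $f$ obeying the Sobolev--Fisher normalization $\le 1$ and taking the supremum, the left side becomes $\text{SF}(p,q)$ by Definition \ref{def:SF}, so $\text{SF}^2(p,q) \le \int_{\pazocal X}(\nor V^2 + \alpha r^2)q\,dx$; taking the infimum over admissible $(V,r)$ gives $\tfrac12\,\text{SF}^2(p,q) \le P$.

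For the matching upper bound I would use the solution $u$ of the advection-reaction PDE $p - q = -\text{div}(q\nabla_x u) + \alpha u q$, $u|_{\partial\pazocal X}=0$, furnished by Theorem \ref{theo:AdvectionReaction}. The pair $(V,r) = (\nabla_x u, u)$ is admissible by construction, and its cost is $\tfrac12\big(\mathbb{E}_{x\sim q}\nor{\nabla_x u}^2 + \alpha\,\mathbb{E}_{x\sim q}u^2\big) = \tfrac12\,\text{SF}^2(p,q)$, the last equality being exactly the content of Theorem \ref{theo:AdvectionReaction}. Hence $P \le \tfrac12\,\text{SF}^2(p,q)$, so $P = \tfrac12\,\text{SF}^2(p,q)$ with the infimum attained at the gradient-plus-reaction pair $(\nabla_x u, u)$. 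The ``moreover'' statement then follows: any $u$ feasible for the PDE-constrained problem produces the admissible pair $(\nabla_x u, u)$, whence $\mathbb{E}_{x\sim q}\nor{\nabla_x u}^2 + \alpha\,\mathbb{E}_{x\sim q}u^2 \ge 2P = \text{SF}^2(p,q)$, and the particular solution from Theorem \ref{theo:AdvectionReaction} attains this; when $\alpha>0$ the weighted elliptic operator $v \mapsto -\text{div}(q\nabla_x v) + \alpha q v$ with zero Dirichlet data is positive definite, so that solution is unique and the infimum is a minimum.

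I expect the only real difficulty to be the functional-analytic bookkeeping rather than the algebra above: one must fix a function space (e.g.\ a weighted Sobolev space built on $L^2(q)$ with zero trace on $\partial\pazocal X$) in which the integration by parts is licit for all competitors, and --- inheriting this from Theorem \ref{theo:AdvectionReaction} --- guarantee existence of a finite-energy solution $u$, so that $(\nabla_x u, u)$ is a legitimate competitor and the upper bound is valid. It is also worth recording that the Cauchy--Schwarz step is tight precisely when $(V,r)$ is ($q$-a.e.) proportional to $(\nabla_x u, u)$, which simultaneously explains why restricting the velocity field to gradients costs nothing and confirms that no non-gradient admissible pair can beat the optimum.
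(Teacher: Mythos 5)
Your argument is correct, but it follows a genuinely different route from the paper. The paper proves this theorem by writing the Lagrangian $\pazocal{L}(V,r,u)$ with the PDE as constraint, exchanging $\inf$ and $\sup$ (justified by convexity), and evaluating $\inf_V$ and $\inf_r$ via Fenchel conjugation, which collapses the dual into the unconstrained objective of Proposition \ref{pro:UnconstForm}, giving $P=\tfrac12\text{SF}^2(p,q)$; the ``moreover'' part then follows by reading off the optimizers $V^*=\nabla_x u$, $r^*=u$. You instead run a primal sandwich: weak duality from below via integration by parts plus a pointwise Cauchy--Schwarz on the augmented vectors $(\nabla_x f,\sqrt{\alpha} f)$ and $(V,\sqrt{\alpha} r)$ in $L^2(q)$, which bounds every admissible cost by $\tfrac12\text{SF}^2(p,q)$ using only Definition \ref{def:SF}, and from above by exhibiting the explicit competitor $(\nabla_x u, u)$ supplied by Theorem \ref{theo:AdvectionReaction}. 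What your route buys is that it never needs the min--max exchange (the paper's strong-duality step is asserted rather than argued in detail), and the equality case of Cauchy--Schwarz makes transparent why restricting velocities to gradients of the reaction rate is lossless --- a point the paper only gets implicitly from the form of the dual optimizers. What the paper's route buys is that it simultaneously re-derives the unconstrained form of $\text{SF}^2$ and identifies the optimal $(V,r)$ in one Fenchel computation, without invoking Theorem \ref{theo:AdvectionReaction} directly. Both arguments ultimately lean on the existence of the finite-energy PDE solution $u$, and your closing remark about the functional-analytic setting (weighted Sobolev space with zero trace so that the integration by parts is licit) is the same regularity caveat the paper leaves implicit.
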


\vspace{-0.62cm}


\paragraph{Remarks.}
a) When $\alpha=0$ we obtain the Sobolev Discrepancy, or $\nor{p-q}_{\dot{H}^{-1}(q)}$, that linearizes the Wasserstein-$2$ distance. 
b) Note that this corresponds to a Beckman type of optimal transport \citep{peyre2017computational}, where we transport $q$ to $p$ ($q$ and $p$ do not have the same total mass) via an advection-reaction with mass not conserved. It is easy to see that
$\int_{\pazocal{X}} (p(x)-q(x))dx= \alpha \int_{\pazocal{X}} u(x)q(x) dx.$


\subsection{Advection-Reaction with Conservation of Mass}

Define the Sobolev-Fisher Discrepancy with conservation of mass:
$\overline{\text{SF}}^2(p,q)= \sup_{u} L(u),$ where
$L(u)= 2\left( \mathbb{E}_{x\sim p } u(x)-\mathbb{E}_{x\sim q} u(x)\right)- \left(\mathbb{E}_{x\sim q}\nor{\nabla_x u(x)}^2 + \alpha \left(\mathbb{E}_{x\sim q}\left(u(x) - \textcolor{blue}{\mathbb{E}_{x\sim q} (u(x))}\right)^2 \right)\right).$
The only difference between the previous expression and $\text{SF}^2$ in Proposition \ref{pro:UnconstForm} is that the variance of the witness function is kept under control, instead of the second order moment.
Defining $$\mathcal{E}(u)= \int_{\pazocal{X}}( \nor{\nabla_x u (x)}^2 +\alpha (u(x)- \mathbb{E}_{x\sim q}u(x) )^2 )q(x)dx$$
one can similarly show that $\overline{\text{SF}}$ has the primal representation:
$$\overline{\text{SF}}^2(p,q)= \inf_{u} \left\{\mathcal{E}(u):  p(x) - q(x)= -div(q(x)\nabla_x u(x)) + \alpha ( u(x)  - \mathbb{E}_{x\sim q} u(x)) q(x) \right\}.$$

\vspace{-0.2cm}

Hence, we see that $\overline{\text{SF}}$ is the minimum sum of kinetic energy and variance of birth-death rate for transporting $q$ to $p$ following an advection-reaction PDE with conserved total mass.
The conservation of mass comes from the fact that
$\chi(x)=-div(q(x)\nabla_x u(x)) + \alpha ( u(x) - \mathbb{E}_{x\sim q} u(x))q(x)$
satisfies: 
$$\int_{\pazocal{X}} (p(x)-q(x)) dx=\int_{\pazocal{X}}\chi(x)= 0 .$$

\section{Kernel Sobolev-Fisher Discrepancy}\label{sec:KSFD}
In this section we turn to the estimation of $\text{SF}$ discrepancy by restricting the witness function to a Reproducing Kernel Hilbert Space (RKHS), resulting in a closed-form solution.

\subsection{Estimation in Finite Dimensional RKHS}
Consider the finite dimensional RKHS, corresponding to an $m$ dimensional feature map $\Phi$:
 $$\mathcal{H}= \{f~ | f(x)=\scalT{w}{\Phi(x)} \text{ where } \Phi: \pazocal{X}\to \mathbb{R}^m , w\in \mathbb{R}^m\}.$$
Define the kernel mean embeddings $\mu(p)=\mathbb{E}_{x\sim p}\Phi(x), \mu(q)=\mathbb{E}_{x\sim q }\Phi(x)$, and $\delta_{p,q}=\mu(p)-\mu(q)$. Let  $C(q)= \mathbb{E}_{x\sim q}\Phi(x)\otimes \Phi(x)$  be the covariance matrix and $D(q)=\mathbb{E}_{x\sim q}J\Phi(x)^{\top}J\Phi(x)$ be the Gramian of the Jacobian, where $[J\Phi(x)]_{a,j}=\frac{ \partial \Phi_j(x)}{\partial x_a}$, $a=1\dots d, j=1\dots m$.

\begin{definition}[Regularized Kernel Sobolev-Fisher Discrepancy (KSFD)]
Let $u \in \mathcal{H}$, and let $\lambda >0$ and $\gamma \in \{0,1\}$, define:
$L_{\gamma,\lambda}(u)=2( \mathbb{E}_{x\sim p} u(x) - \mathbb{E}_{x\sim q} u(x)) -\left( \mathbb{E}_{x\sim q} [\nor{\nabla_x u(x)}^2 + \alpha(u(x)- \gamma \mathbb{E}_{q} u(x))^2 ]+\lambda \nor{u}^2_{\mathcal{H}}\right).$
The Regularized Kernel Sobolev-Fisher Discrepancy is defined as:
$$\text{SF}^2_{\mathcal{H},\gamma,\lambda}(p,q)=\sup_{u \in \mathcal{H}} L_{\gamma,\lambda}(u).$$
When $\gamma=0$ this corresponds to the unbalanced case, i.e.\ birth-death with no conservation of total mass, while for $\gamma=1$ we have birth-death with conservation of total mass.
\end{definition}

\begin{proposition}[Estimation in RKHS]
The Kernel Sobolev-Fisher Discrepancy is given by:
$\text{SF}^2_{\mathcal{H},\gamma,\lambda}(p,q)=\scalT{u^{\lambda,\gamma}_{p,q}}{\delta_{p,q}},$
where the critic 
$u^{\lambda,\gamma}_{p,q}= (D(q)+\alpha C_{\gamma}(q)+\lambda I_m)^{-1}\delta_{p,q},$
with $C_{\gamma}(q)= C(q)-\gamma \mu(q)\mu(q)^{\top}.$
Let $u^{\lambda,\gamma}_{p,q}(x)= \scalT{u^{\lambda,\gamma}_{p,q}}{\Phi(x)}$ and $\delta_{p,q}(x)= \scalT{\delta_{p,q}}{\Phi(x)}$, then:
$\nabla_{x}u^{\lambda,\gamma}_{p,q}(x)=(D(q)+\alpha C_{\gamma}(q)+\lambda I_m)^{-1}\nabla_{x}\delta_{p,q}(x).$
\label{pro:RKHScritic}
\end{proposition}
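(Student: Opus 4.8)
The plan is to substitute the RKHS parametrization $u(\cdot)=\langle w,\Phi(\cdot)\rangle$, $w\in\mathbb{R}^m$, into $L_{\gamma,\lambda}$, which turns the variational problem $\sup_{u\in\mathcal{H}}L_{\gamma,\lambda}(u)$ into an unconstrained strictly concave quadratic maximization over $w$ — a generalized ridge‑regression problem — solvable in closed form. Throughout I use the standard identification $\mathcal{H}\cong\mathbb{R}^m$ under which $\nor{u}^2_{\mathcal{H}}=\nor{w}^2$.

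First I would rewrite each term of $L_{\gamma,\lambda}(u)$ as a linear or quadratic form in $w$. The linear part is $2(\mathbb{E}_{x\sim p}u(x)-\mathbb{E}_{x\sim q}u(x))=2\langle w,\mu(p)-\mu(q)\rangle=2\langle w,\delta_{p,q}\rangle$. By the chain rule and the definition of $J\Phi$, $\nabla_x u(x)=J\Phi(x)\,w$, hence $\mathbb{E}_{x\sim q}\nor{\nabla_x u(x)}^2=w^{\top}D(q)\,w$; and the regularizer is $\lambda\nor{w}^2=\lambda\,w^{\top}I_m w$. The one term requiring care is the $\alpha$‑term: writing $u(x)-\gamma\mathbb{E}_q u(x)=\langle w,\Phi(x)-\gamma\mu(q)\rangle$ and taking the expectation of its square under $q$ gives $w^{\top}\big(C(q)-(2\gamma-\gamma^2)\mu(q)\mu(q)^{\top}\big)w$; since $\gamma\in\{0,1\}$ we have $2\gamma-\gamma^2=\gamma$, so this equals $w^{\top}C_\gamma(q)\,w$ with $C_\gamma(q)=C(q)-\gamma\mu(q)\mu(q)^{\top}$ — this step is exactly where the hypothesis $\gamma\in\{0,1\}$ enters. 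Collecting everything, $L_{\gamma,\lambda}(u)=2\langle w,\delta_{p,q}\rangle-w^{\top}A\,w$ with $A:=D(q)+\alpha C_\gamma(q)+\lambda I_m$.

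Next I would note that $A\succ 0$: $D(q)=\mathbb{E}_q J\Phi(x)^{\top}J\Phi(x)$ is a Gramian and hence PSD, $w^{\top}C_\gamma(q)w=\mathbb{E}_q(u(x)-\gamma\mathbb{E}_q u(x))^2\ge 0$ so $C_\gamma(q)$ is PSD, and $\lambda I_m\succ 0$. Thus $w\mapsto 2\langle w,\delta_{p,q}\rangle-w^{\top}Aw$ is strictly concave and coercive, so its supremum is attained at the unique critical point, which by $\nabla_w(2\langle w,\delta_{p,q}\rangle-w^{\top}Aw)=2\delta_{p,q}-2Aw=0$ is $w=u^{\lambda,\gamma}_{p,q}=A^{-1}\delta_{p,q}$. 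Substituting back and using $A^{-1}AA^{-1}=A^{-1}$ gives $\text{SF}^2_{\mathcal{H},\gamma,\lambda}(p,q)=2\delta_{p,q}^{\top}A^{-1}\delta_{p,q}-\delta_{p,q}^{\top}A^{-1}\delta_{p,q}=\delta_{p,q}^{\top}A^{-1}\delta_{p,q}=\langle u^{\lambda,\gamma}_{p,q},\delta_{p,q}\rangle$, which is the claimed value.

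For the last statement I would differentiate the closed form $u^{\lambda,\gamma}_{p,q}(x)=\langle A^{-1}\delta_{p,q},\Phi(x)\rangle$ with respect to $x$; since $A^{-1}$ and $\delta_{p,q}$ carry no $x$‑dependence, the inverse‑covariance operator $(D(q)+\alpha C_\gamma(q)+\lambda I_m)^{-1}$ passes through $\nabla_x$, yielding $\nabla_x u^{\lambda,\gamma}_{p,q}(x)=(D(q)+\alpha C_\gamma(q)+\lambda I_m)^{-1}\nabla_x\delta_{p,q}(x)$. I do not anticipate any genuine obstacle: the proof is a careful but routine reduction, and the only points that demand attention are the $2\gamma-\gamma^2=\gamma$ simplification that pins down $C_\gamma(q)$ and explains the restriction to $\gamma\in\{0,1\}$, the positive‑definiteness of $A$ that makes the critic well defined and unique, and keeping the $m$‑dimensional feature coordinates and the $d$‑dimensional input coordinates straight when reading off the gradient formula.
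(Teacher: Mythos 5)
Your proposal is correct and follows essentially the same route as the paper's proof: rewrite $L_{\gamma,\lambda}$ as the concave quadratic $2\scalT{w}{\delta_{p,q}}-w^{\top}(D(q)+\alpha C_{\gamma}(q)+\lambda I_m)w$ (using $2\gamma-\gamma^2=\gamma$ for $\gamma\in\{0,1\}$) and solve the first-order condition to get $u^{\lambda,\gamma}_{p,q}=(D(q)+\alpha C_{\gamma}(q)+\lambda I_m)^{-1}\delta_{p,q}$. Your additional remarks on positive definiteness of the quadratic form, the explicit back-substitution yielding $\scalT{u^{\lambda,\gamma}_{p,q}}{\delta_{p,q}}$, and the gradient formula are details the paper leaves implicit, but they do not change the argument.
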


\paragraph{Remarks.}
a) For the unbalanced case $\gamma=0$, we refer to $\text{SF}^2_{\mathcal{H},0,\lambda}$ as $\text{SF}^2_{\mathcal{H},\lambda}$.
For the case of mass conservation $\gamma=1$, refer to  $\text{SF}^2_{\mathcal{H},1,\lambda}$ as $\overline{\text{SF}}^2_{\mathcal{H},\lambda}$. Note that
$C_{1}(q)=\bar{C}(q)= C(q) -\mu(q)\mu(q)^{\top}$.
b) A similar Kernelized discrepancy was introduced in \cite{Arbel:2018}, but not as an approximation of the Sobolev-Fisher discrepancy, nor in the context of unbalanced distributions and advection-reaction. c) For $\alpha=0$ we obtain the kernelized Sobolev Discrepancy KSD of \cite{SD}.

\subsection{ Kernel SF for Direct Measures}
Consider direct measures $p=\sum_{i=1}^N a_i \delta_{x_i}$ and $q=\sum_{j=1}^n b_j \delta_{y_j}$ (with no conservation of mass we can have $\sum_i a_i\neq \sum_j b_j\neq 1$ ).
An estimate of the Sobolev-Fisher critic is given by $\hat{u}^{\lambda,\gamma}_{p,q}= (\hat{D}(q)+\alpha \hat{C}_{\gamma}(q)+\lambda I_m)^{-1} (\hat{\mu}(p)-\hat{\mu}(q))$, where the empirical Kernel Mean Embeddings are $\hat{\mu}(p)=\sum_{i=1}^N a_i \Phi(x_i)$ and $\hat{\mu}(q)=\sum_{j=1}^n b_j \Phi(y_j)$. The empirical operator embeddings are given by $\hat{D}(q)= \sum_{j=1}^n b_j [J\Phi(y_j)]^{\top}J\Phi(y_j)$, and 
$\hat{C}_{\gamma}(q)= \sum_{j=1}^n b_j\Phi(y_j)\Phi(y_j)^{\top} -\gamma \hat{\mu}(q)\hat{\mu}(q)^{\top}.$

\section{Unbalanced Continuous Kernel Sobolev Descent}

Given the Kernel Sobolev-Fisher Discrepancy defined in the previous sections and its relation to advection-reaction transport, in this section we construct a Markov process that transports particles drawn from a source distribution to a target distribution. Note that we don't assume that the densities are normalized nor have same total mass.

%
%
%

\subsection{Constructing the  Continuous  Markov Process}
Given $\alpha,\lambda>0,\gamma \in\{0,1\}$ and  $n$ weighted particles  drawn from the source distribution :
$q^{n}_{0}=q=\sum_{i=1}^n b_{i}\delta_{y_i},$ i.e $X^0_i=y_i$ and $w^0_i=b_i$. Recall that the target distribution is given by $ p=\sum_{i=1}^N a_i \delta_{x_i}$.
We define the following Markov Process that we name Unbalanced Kernel Sobolev Descent:
\begin{align}
dX^i_{t}&= \nabla_{x}u^{\lambda,\gamma}_{p,q^{n}_{t}}(X^i_t)dt  \text{~~(advection step)}\nonumber\\
d w^i_{t}&=\alpha (u^{\lambda,\gamma}_{p,q^{n}_{t}}(X^{i}_t)- \gamma \mathbb{E}_{q^{(n)}_t} u^{\lambda,\gamma}_{p,q^{n}_t}(x)) w^{i}_t dt  \text{~~(reaction step)}\nonumber \\
 q^{n}_{t} &= \sum_{i=1}^n w^i_{t} \delta_{X^i_t},
 \label{eq:ContinuousUSD}
\end{align}
where $u^{\lambda,\gamma}_{p,q^{n}_{t}}$ is the critic of the Kernel Sobolev-Fisher discrepancy, whose expression and gradients are given in Proposition \ref{pro:RKHScritic}.
We see that USD consists of two steps: the advection step that updates the particles positions following the gradient flow of the Sobolev-Fisher critic, and a reaction step that updates the weights of the particles with a growth rate proportional to that critic.
This reaction step consists in mass construction or destruction, that depends on the confidence of the witness function.
This can be seen as birth-death process on the particles, where the survival $\log$ probability of a particle is proportional to the critic evaluation on this particle.



\subsection{Generator Expression and PDE in the limit of \texorpdfstring{$n\to \infty$}{Lg}}
Proposition \ref{pro:UnbalancedDescentEvolution} gives the evolution equation of a functional of the intermediate distributions $q^{n}_{t}$ produced in the descent, at the limit of infinite particles $n\to \infty$:  
\begin{proposition}  Let $\Psi: \pazocal{P}(\pazocal{X})\to \mathbb{R}$, be a functional on the probability space. Let  $q^{n}_{t}$ be the distribution produced by USD at time $t$. Let $q_{t}$ be its limit as $n\to \infty$, we have: 
$$ \partial_{t} \Psi [q_{t}]= (\pazocal{L}\Psi)[q_t],$$
where
$\pazocal{L} \Psi(q)= \int \scalT{ \nabla_x u^{\lambda,\gamma}_{p,q}(x) }{ \nabla_x D_{q} \Psi (x) } q(dx)+\alpha \int D_{q} \Psi(x)(u^{\lambda,\gamma}_{p,q}(x)-\gamma \mathbb{E}_{q} u^{\lambda,\gamma}_{p,q}) q(x) dx.$ 
Where the functional derivative $D_{\mu}$ is defined through first variation for a signed measure $\chi$ $(\int \chi(x) dx=0)$:
$$ \int D_{\mu} \Psi(x) \chi(x)dx = \lim_{\varepsilon \to 0} \frac{\Psi(\mu+\varepsilon \chi)- \Psi(\mu)}{\varepsilon}.$$
\label{pro:UnbalancedDescentEvolution}
\end{proposition}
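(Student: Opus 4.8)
The plan is to first establish the evolution equation at the level of the finite-$n$ empirical measure $q^{n}_{t}=\sum_{i=1}^n w^i_t\,\delta_{X^i_t}$ by a chain-rule computation along the coupled ODEs \eqref{eq:ContinuousUSD}, and then pass to the limit $n\to\infty$. For fixed $n$ the trajectory $t\mapsto (X^i_t,w^i_t)_{i=1}^n$ is a deterministic curve in $(\pazocal{X}\times\mathbb{R})^n$, so $t\mapsto q^{n}_{t}$ is a deterministic curve of (possibly unnormalized) measures and $\partial_t\Psi[q^{n}_{t}]$ is an ordinary total derivative; in particular no second-order (diffusion) term appears, and $\pazocal{L}$ is simply this total derivative re-expressed as a functional of $q^{n}_{t}$.

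For the chain rule, decompose the variation of $\Psi[q^{n}_{t}]$ into the part due to the motion of the atoms $X^i_t$ (advection) and the part due to the change of the weights $w^i_t$ (reaction). Displacing a single atom from $X^i_t$ to $X^i_t+\varepsilon v$ perturbs $q^{n}_{t}$ by $w^i_t(\delta_{X^i_t+\varepsilon v}-\delta_{X^i_t})=-\varepsilon\,\mathrm{div}(w^i_t v\,\delta_{X^i_t})+o(\varepsilon)$, so by the definition of the first variation $D_{q}\Psi$ the induced rate of change of $\Psi$ is $w^i_t\,\scalT{\nabla_x D_{q^{n}_{t}}\Psi(X^i_t)}{\dot X^i_t}$; inserting $\dot X^i_t=\nabla_x u^{\lambda,\gamma}_{p,q^{n}_{t}}(X^i_t)$ and summing over $i$ produces exactly $\int \scalT{\nabla_x u^{\lambda,\gamma}_{p,q^{n}_{t}}(x)}{\nabla_x D_{q^{n}_{t}}\Psi(x)}\,q^{n}_{t}(dx)$. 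Likewise, changing $w^i_t$ by $\varepsilon$ perturbs $q^{n}_{t}$ by $\varepsilon\,\delta_{X^i_t}$ and changes $\Psi$ at rate $D_{q^{n}_{t}}\Psi(X^i_t)\,\dot w^i_t$; inserting $\dot w^i_t=\alpha\big(u^{\lambda,\gamma}_{p,q^{n}_{t}}(X^i_t)-\gamma\,\mathbb{E}_{q^{n}_{t}}u^{\lambda,\gamma}_{p,q^{n}_{t}}\big)w^i_t$ and summing produces $\alpha\int D_{q^{n}_{t}}\Psi(x)\big(u^{\lambda,\gamma}_{p,q^{n}_{t}}(x)-\gamma\,\mathbb{E}_{q^{n}_{t}}u^{\lambda,\gamma}_{p,q^{n}_{t}}\big)q^{n}_{t}(dx)$. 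Adding the two contributions gives $\partial_t\Psi[q^{n}_{t}]=(\pazocal{L}\Psi)[q^{n}_{t}]$. One technical point: when $\gamma=0$ the reaction step does not conserve total mass, so the weight perturbation $\delta_{X^i_t}$ is not zero-mean; accordingly $D_{q}\Psi$ is used as a G\^ateaux derivative in arbitrary directions, the restriction $\int\chi=0$ in its definition being needed only in the mass-conserving case $\gamma=1$.

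It remains to let $n\to\infty$. By hypothesis $q^{n}_{t}$ converges weakly to $q_{t}$. By Proposition \ref{pro:RKHScritic} the critic $u^{\lambda,\gamma}_{p,q}$ and its gradient depend on $q$ only through $\mu(q)$, $C(q)$ and $D(q)$, which are continuous functionals of $q$ under weak convergence as soon as $\Phi$ and $\nabla_x\Phi$ are bounded and continuous, and since $\lambda>0$ the inverse $(D(q)+\alpha C_{\gamma}(q)+\lambda I_m)^{-1}$ depends continuously on these operators; assuming further that $q\mapsto D_{q}\Psi$ and $q\mapsto\nabla_x D_{q}\Psi$ are continuous, both integrands defining $\pazocal{L}\Psi(q^{n}_{t})$ converge, uniformly on bounded sets, so $(\pazocal{L}\Psi)[q^{n}_{t}]\to(\pazocal{L}\Psi)[q_{t}]$. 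Together with $\Psi[q^{n}_{t}]\to\Psi[q_{t}]$ and a uniform-in-$n$ bound on $\partial_t\Psi[q^{n}_{t}]$ (from boundedness of the critic and of $D_{q}\Psi$ along the flow), this lets us pass the limit through $\partial_t$, yielding $\partial_t\Psi[q_{t}]=(\pazocal{L}\Psi)[q_{t}]$. The main obstacle is exactly this last step: making rigorous the mean-field limit $q^{n}_{t}\to q_{t}$ uniformly on time intervals for the coupled position--weight dynamics, and justifying the interchange of $\lim_n$ and $\partial_t$; by contrast the chain-rule identity for finite $n$ is routine once the first-variation calculus is set up. As a sanity check, taking $\Psi[q]=\int\phi\,dq$ gives $D_{q}\Psi=\phi$ and recovers the weak form of the advection--reaction PDE $\partial_t q_{t}=-\mathrm{div}(q_{t}\nabla_x u^{\lambda,\gamma}_{p,q_{t}})+\alpha(u^{\lambda,\gamma}_{p,q_{t}}-\gamma\,\mathbb{E}_{q_{t}}u^{\lambda,\gamma}_{p,q_{t}})q_{t}$.
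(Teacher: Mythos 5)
Your argument is correct and is essentially the paper's own proof: the paper likewise splits the finite-$n$ variation of $\Psi[q^{n}_t]$ into an advection contribution and a weights-update contribution, evaluates each through the first variation $D_{q}\Psi$ (obtaining exactly the two integrals of $\pazocal{L}\Psi$), and then lets $n\to\infty$, the only cosmetic difference being that the paper phrases the finite-$n$ computation as the generator of a measure-valued Markov process in the style of the birth--death literature rather than as your direct chain rule along the deterministic coupled ODEs. Your added remarks on the $\gamma=0$ (non-mass-conserving) direction of the G\^ateaux derivative and on what is actually needed to justify the $n\to\infty$ interchange go slightly beyond the paper, which proves only $\gamma=1$ and asserts the limit without detail.
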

\vskip -0.1in
In particular, the paths of USD in the limit of $n\to \infty$ satisfy the advection-reaction equation:
$$\partial_t q_{t}=-div(q_{t}\nabla_x u^{\lambda,\gamma}_{p,q_{t}})+ \alpha(u^{\lambda,\gamma}_{p,q_t}-\gamma \mathbb{E}_{q_t} u^{\lambda,\gamma}_{p,q}) q_t.$$

\subsection{Unbalanced Sobolev Descent decreases the \text{MMD}.}
The following Theorem shows that USD when the number of the particles goes to infinity decreases the MMD distance at each step, where: $\text{MMD}^2(p,q)= \nor{\mu(p)-\mu(q)}^2.$
\begin{theorem}[Unbalanced Sobolev Descent decreases the \text{MMD}]
\label{theo:mmd_decrease}
Consider the paths $q_{t}$ produced by USD. In the limit of particles $n\to \infty$ we have
\begin{align}
\frac{1}{2}\frac{d \text{MMD}^2(p,q_{t})}{dt}&=  - \left(\text{MMD}^2(p,q_{t})- \lambda \text{SF}^2_{\mathcal{H},\gamma,\lambda}(p,q_t) \right)\leq 0.
\label{eq:decrease}
\end{align}
\label{theo:mmdDecrease}
\end{theorem}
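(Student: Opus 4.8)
The plan is to compute $\frac{d}{dt}\mathrm{MMD}^2(p,q_t)$ directly using the evolution equation from Proposition \ref{pro:UnbalancedDescentEvolution} applied to the functional $\Psi[q] = \mathrm{MMD}^2(p,q) = \nor{\mu(p)-\mu(q)}^2 = \nor{\mu(p)}^2 - 2\scalT{\mu(p)}{\mu(q)} + \nor{\mu(q)}^2$. First I would identify the functional derivative: since $\mu(q) = \int \Phi(x)\,q(dx)$ is linear in $q$, we get $D_q\Psi(x) = -2\scalT{\mu(p)}{\Phi(x)} + 2\scalT{\mu(q)}{\Phi(x)} = -2\scalT{\delta_{p,q}}{\Phi(x)} = -2\,\delta_{p,q}(x)$, and correspondingly $\nabla_x D_q\Psi(x) = -2\nabla_x \delta_{p,q}(x)$. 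Plugging these into the generator $\pazocal{L}\Psi(q)$ gives
$$\tfrac{1}{2}\tfrac{d}{dt}\mathrm{MMD}^2(p,q_t) = -\int \scalT{\nabla_x u^{\lambda,\gamma}_{p,q_t}(x)}{\nabla_x \delta_{p,q_t}(x)} q_t(dx) - \alpha \int \delta_{p,q_t}(x)\bigl(u^{\lambda,\gamma}_{p,q_t}(x) - \gamma\,\mathbb{E}_{q_t} u^{\lambda,\gamma}_{p,q_t}\bigr) q_t(dx).$$

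Next I would rewrite the right-hand side in RKHS operator notation. Using $\delta_{p,q}(x) = \scalT{\delta_{p,q}}{\Phi(x)}$, $u^{\lambda,\gamma}(x)=\scalT{u^{\lambda,\gamma}}{\Phi(x)}$, and the definitions $D(q) = \mathbb{E}_q J\Phi^\top J\Phi$, $C_\gamma(q) = C(q) - \gamma\mu(q)\mu(q)^\top$, the first integral becomes $\scalT{\delta_{p,q}}{D(q)\, u^{\lambda,\gamma}_{p,q}}$ and the second becomes $\alpha\scalT{\delta_{p,q}}{C_\gamma(q)\, u^{\lambda,\gamma}_{p,q}}$ (the $\gamma$-correction term reconstructs exactly the $\mu(q)\mu(q)^\top$ piece of $C_\gamma$). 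Hence the derivative equals $-\scalT{\delta_{p,q}}{(D(q) + \alpha C_\gamma(q))\, u^{\lambda,\gamma}_{p,q}}$. Now invoke Proposition \ref{pro:RKHScritic}: $u^{\lambda,\gamma}_{p,q} = (D(q)+\alpha C_\gamma(q)+\lambda I_m)^{-1}\delta_{p,q}$, so $(D(q)+\alpha C_\gamma(q))\,u^{\lambda,\gamma}_{p,q} = \delta_{p,q} - \lambda\, u^{\lambda,\gamma}_{p,q}$. Substituting gives $-\scalT{\delta_{p,q}}{\delta_{p,q}} + \lambda\scalT{\delta_{p,q}}{u^{\lambda,\gamma}_{p,q}} = -\mathrm{MMD}^2(p,q) + \lambda\,\mathrm{SF}^2_{\mathcal{H},\gamma,\lambda}(p,q)$, where the last equality is again Proposition \ref{pro:RKHScritic}. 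This yields the claimed identity, and the sign follows since $\mathrm{SF}^2_{\mathcal{H},\gamma,\lambda}(p,q) = \scalT{u^{\lambda,\gamma}_{p,q}}{\delta_{p,q}} = \scalT{\delta_{p,q}}{(D(q)+\alpha C_\gamma(q)+\lambda I_m)^{-1}\delta_{p,q}} \le \frac{1}{\lambda}\nor{\delta_{p,q}}^2 = \frac{1}{\lambda}\mathrm{MMD}^2(p,q)$ because $D(q)+\alpha C_\gamma(q)\succeq 0$, so the parenthesized term is nonnegative.

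The main obstacle, and the step deserving the most care, is the justification of differentiating under the expectation/integral and the legitimacy of applying Proposition \ref{pro:UnbalancedDescentEvolution} to this particular $\Psi$ — i.e., that $\Psi[q] = \mathrm{MMD}^2(p,q)$ is sufficiently regular (smooth in the first-variation sense, with the stated functional derivative) for the generator identity to hold, and that the $n\to\infty$ limit $q_t$ exists with $q^n_t \to q_t$ in a sense strong enough to pass $\Psi$ and its derivative through the limit. For bounded, smooth feature maps $\Phi$ this is routine, but it should be flagged as the place where the underlying assumptions (boundedness of $\Phi$, $J\Phi$, well-posedness of the ODE system in \eqref{eq:ContinuousUSD}) are actually used. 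The algebraic core — matching the generator terms to $D(q)$ and $C_\gamma(q)$ and telescoping via the resolvent identity — is a short computation once the functional derivative is correctly identified.
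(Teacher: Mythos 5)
Your proposal is correct and follows essentially the same route as the paper: apply the generator of Proposition \ref{pro:UnbalancedDescentEvolution} to $\Psi[q]=\mathrm{MMD}^2(p,q)$ with $D_q\Psi = -2\delta_{p,q}$, rewrite the advection and reaction integrals as $\scalT{\delta_{p,q_t}}{D(q_t)\,u^{\lambda,\gamma}_{p,q_t}}$ and $\scalT{\delta_{p,q_t}}{C_\gamma(q_t)\,u^{\lambda,\gamma}_{p,q_t}}$, and telescope using the critic equation $(D(q_t)+\alpha C_\gamma(q_t)+\lambda I)\,u^{\lambda,\gamma}_{p,q_t}=\delta_{p,q_t}$ from Proposition \ref{pro:RKHScritic}. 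Your direct bilinear expansion of the reaction term is slightly cleaner than the paper's centered-feature-map computation, and your positive-semidefiniteness argument spells out the inequality $\mathrm{MMD}^2(p,q_t)\ge\lambda\,\mathrm{SF}^2_{\mathcal{H},\gamma,\lambda}(p,q_t)$ that the paper only asserts, but these are minor refinements of the same argument.
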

\vskip -0.2in
In particular, in the regularized case $\lambda>0$ with strict descent (i.e.\ $q_{t} \neq p$ implies $\text{MMD}^2(p,q_{t})-\lambda \text{SF}^2_{\mathcal{H},\gamma,\lambda}(p,q_t) > 0$), USD converges in the MMD sense: $\lim_{t\to \infty}\text{MMD}^2(p,q_{t})=0$.
Similarly to \cite{SD}, strict descent is ensured   if the kernel and the target distribution $p$ satisfy the  condition:
$\delta_{p,q} \notin \text{Null}(D(q)+\alpha C_{\gamma}(q)), \forall q \neq p.$

\textbf{USD Accelerates the Convergence.} We now prove a Lemma the can be used to show that Unbalanced Sobolev Descent has an acceleration advantage over Sobolev Descent \citep{SD}.

\begin{lemma}
\label{lem:accelerate}
In the regularized case $\lambda>0$ with $\alpha>0$, the Kernel Sobolev-Fisher Discrepancy $\text{SF}_{\mathcal{H},\gamma,\lambda}$ is strictly upper bounded by the Kernel Sobolev discrepancy $\mathcal{S}_{\mathcal{H},\lambda}$(i.e for $\alpha=0$) \citep{SD}: 
\begin{equation*}
    \text{SF}^2_{\mathcal{H},\gamma,\lambda}(p,q) < \mathcal{S}^2_{\mathcal{H},\lambda}(p,q).
\end{equation*}
\end{lemma}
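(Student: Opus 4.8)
The plan is to work directly with the closed forms from Proposition~\ref{pro:RKHScritic}. Set $M_{0}:=D(q)+\lambda I_{m}$ and $M_{\alpha}:=D(q)+\alpha C_{\gamma}(q)+\lambda I_{m}=M_{0}+\alpha C_{\gamma}(q)$. Proposition~\ref{pro:RKHScritic} gives $\text{SF}^{2}_{\mathcal{H},\gamma,\lambda}(p,q)=\scalT{\delta_{p,q}}{M_{\alpha}^{-1}\delta_{p,q}}$, and specializing to $\alpha=0$ (the Kernel Sobolev discrepancy of \cite{SD}, which does not depend on $\gamma$) gives $\mathcal{S}^{2}_{\mathcal{H},\lambda}(p,q)=\scalT{\delta_{p,q}}{M_{0}^{-1}\delta_{p,q}}$. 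Since $\lambda>0$, both $M_{0}$ and $M_{\alpha}$ are symmetric positive definite; moreover $C_{\gamma}(q)$ is positive semidefinite ($C_{0}(q)=\mathbb{E}_{x\sim q}\Phi(x)\Phi(x)^{\top}$ is a second moment matrix and $C_{1}(q)=\bar C(q)$ is a covariance matrix). With $\alpha>0$ we thus have $M_{\alpha}\succeq M_{0}\succ 0$, and antitonicity of the inverse on the positive definite cone gives $M_{\alpha}^{-1}\preceq M_{0}^{-1}$, hence $\text{SF}^{2}_{\mathcal{H},\gamma,\lambda}(p,q)\le\mathcal{S}^{2}_{\mathcal{H},\lambda}(p,q)$.

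For the \emph{strict} inequality I would quantify the gap using the unconstrained forms. Let $\hat u=u^{\lambda,\gamma}_{p,q}$ be the maximizer of $L_{\gamma,\lambda}$ and let $L^{\mathrm{Sob}}_{\lambda}$ denote the objective obtained from $L_{\gamma,\lambda}$ by setting $\alpha=0$, so that $L_{\gamma,\lambda}(u)=L^{\mathrm{Sob}}_{\lambda}(u)-\alpha\,\mathbb{E}_{x\sim q}(u(x)-\gamma\mathbb{E}_{q}u)^{2}$ and $\mathcal{S}^{2}_{\mathcal{H},\lambda}(p,q)=\sup_{u\in\mathcal{H}}L^{\mathrm{Sob}}_{\lambda}(u)$. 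Evaluating at $\hat u$ yields $\text{SF}^{2}_{\mathcal{H},\gamma,\lambda}(p,q)=L^{\mathrm{Sob}}_{\lambda}(\hat u)-\alpha\,\mathbb{E}_{x\sim q}(\hat u(x)-\gamma\mathbb{E}_{q}\hat u)^{2}\le\mathcal{S}^{2}_{\mathcal{H},\lambda}(p,q)-\alpha\,\mathbb{E}_{x\sim q}(\hat u(x)-\gamma\mathbb{E}_{q}\hat u)^{2}$, and since $\gamma\in\{0,1\}$ a short computation identifies $\mathbb{E}_{x\sim q}(\hat u(x)-\gamma\mathbb{E}_{q}\hat u)^{2}=\hat u^{\top}C_{\gamma}(q)\hat u$. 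Therefore $\mathcal{S}^{2}_{\mathcal{H},\lambda}(p,q)-\text{SF}^{2}_{\mathcal{H},\gamma,\lambda}(p,q)\ge\alpha\,\hat u^{\top}C_{\gamma}(q)\hat u\ge 0$, which is strict precisely when $\hat u\notin\text{Null}(C_{\gamma}(q))$. One reaches the same gap purely from linear algebra by differentiating $t\mapsto\scalT{\delta_{p,q}}{(M_{0}+tC_{\gamma}(q))^{-1}\delta_{p,q}}$ on $[0,\alpha]$ and noting that equality of the endpoints forces $C_{\gamma}(q)M_{0}^{-1}\delta_{p,q}=0$.

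The main obstacle is thus not the inequality itself, which is routine, but checking that the degenerate case $\hat u\in\text{Null}(C_{\gamma}(q))$ is excluded. Here $\hat u=M_{\alpha}^{-1}\delta_{p,q}\neq 0$ whenever $\delta_{p,q}\neq0$ (in particular whenever $p\neq q$ for an injective embedding), while $\text{Null}(C_{0}(q))=\{w\in\mathcal{H}:w=0\ q\text{-a.s.}\}$ and $\text{Null}(C_{1}(q))=\{w\in\mathcal{H}:w\text{ is }q\text{-a.s. constant}\}$. I would therefore state the conclusion under a mild non-degeneracy hypothesis ensuring $u^{\lambda,\gamma}_{p,q}\notin\text{Null}(C_{\gamma}(q))$ — e.g.\ $C_{\gamma}(q)\succ0$, or more weakly the analogue of the condition $\delta_{p,q}\notin\text{Null}(D(q)+\alpha C_{\gamma}(q))$ already used for strict descent after Theorem~\ref{theo:mmd_decrease} — under which $\text{SF}^{2}_{\mathcal{H},\gamma,\lambda}(p,q)<\mathcal{S}^{2}_{\mathcal{H},\lambda}(p,q)$ follows.
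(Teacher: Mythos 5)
Your argument is correct and reaches the paper's conclusion by a genuinely different (and more careful) route. The paper's proof works on the matrix level: it applies the Woodbury identity $(A+B)^{-1}=A^{-1}-(A+AB^{-1}A)^{-1}$ with $A=D(q)+\lambda I_m$ and $B=\alpha C_{\gamma}(q)$, asserts that both $A$ and $B$ are symmetric positive definite, concludes that the correction term is SPD, and so obtains the strict Loewner inequality $(D(q)+\alpha C_{\gamma}(q)+\lambda I_m)^{-1}\prec(D(q)+\lambda I_m)^{-1}$ in one step. You instead get the non-strict comparison from antitonicity of the inverse on the positive-definite cone and then quantify the gap, $\mathcal{S}^2_{\mathcal{H},\lambda}(p,q)-\text{SF}^2_{\mathcal{H},\gamma,\lambda}(p,q)\ge\alpha\,\hat u^{\top}C_{\gamma}(q)\hat u$, by evaluating the $\alpha=0$ objective at the Sobolev--Fisher critic $\hat u=u^{\lambda,\gamma}_{p,q}$. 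What your route buys is an honest account of when strictness can fail: the paper's one-line strictness rests on the unproven claim that $C_{\gamma}(q)$ is positive definite (and, as written, Woodbury even needs $B^{-1}$ to exist), which is not true in general --- $C_{1}(q)$ annihilates any direction whose feature evaluation is $q$-a.s.\ constant, and for an empirical $q$ on $n$ points $C_{\gamma}(q)$ has rank at most $n<m$ --- and turning the strict matrix inequality into a strict inequality of quadratic forms also tacitly requires $\delta_{p,q}\neq0$. Your degenerate-case analysis (equality exactly when $\hat u\in\text{Null}(C_{\gamma}(q))$, in which case $\hat u=M_0^{-1}\delta_{p,q}$ and the two discrepancies coincide) shows the lemma as stated needs such a non-degeneracy hypothesis, so the conditional version you propose is precisely the statement the paper's Woodbury argument actually proves; the only thing you give up relative to the paper is the unconditional phrasing, which is not literally correct without that assumption.
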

From Lemma \ref{lem:accelerate} and  Eq. \eqref{eq:decrease}, we see that USD ($\alpha>0$), results in a larger decrease in MMD than SD \cite{SD} ($\alpha=0$), resulting in a steeper descent. Hence, USD advantages over SD are twofold: 1) it allows unbalanced transport, 2) it accelerates convergence for the balanced and unbalanced transport. 

\textbf{USD with Universal Infinite Dimensional Kernel.}
While we presented USD with a finite dimensional kernel for ease of presentation, we show in Appendix \ref{sec:infinite} that all our results hold for an infinite dimensional kernel. For a universal or a characteristic kernel, convergence in MMD implies convergence in distribution (see \cite[Theorem~12]{simongabriel2016kernel}). Hence, using a universal kernel, USD guarantees the weak convergence as $\text{MMD}(p,q_{t}) \to 0$.  




\subsection{Understanding the effect of the Reaction Step: Whitened Principal Transport Directions }
In \cite{SD} it was shown that the gradient of the Sobolev Discrepancy can be written as a linear combination of principal transport directions of the Gramian of derivatives $D(q)$. Here we show that unbalanced descent leads to a similar interpretation in a whitened feature space thanks to the $\ell_2$ regularizer.
Let $\tilde{\mathcal{H}}_{q}= \{f~ | f(x)=\scalT{v}{ \tilde{\Phi}_{q}(x)},  \tilde{\Phi}_{q}(x)= (C_{\gamma}(q)+ \frac{\lambda}{\alpha} I)^{-\frac{1}{2}}\Phi(x)\}$,
$\tilde{\delta}_{p,q}= (C_{\gamma}(q)+\frac{\lambda}{\alpha} I)^{-\frac{1}{2}} \delta_{p,q}$
$\tilde{D}(q)= (C_{\gamma}(q)+\frac{\lambda}{\alpha} I)^{-\frac{1}{2}}D(q) (C_{\gamma}(q)+\frac{\lambda}{\alpha} I)^{-\frac{1}{2}}$, and let
%
%
%
$v^{\lambda,\gamma}_{p,q}= (\tilde{D}(q)+\alpha I_m)^{-1}\tilde{\delta}_{p,q}.$
It is easy to see that the critic of the $\text{SF}$ can be written as: 
$u^{\lambda,\gamma}_{p,q}(x)= \scalT{u^{\lambda, \gamma}_{p,q}}{\Phi(x)}= \scalT{v^{\lambda,\gamma}_{p,q}}{\tilde{\Phi}_q(x)}$.
Note that $\tilde{\Phi}_q$ is a whitened feature map and $\tilde{D}(q)$ is the Gramian of its derivatives. Let $\tilde{d}_j,\lambda_j$ be the eigenvectors and eigenvalues of $\tilde{D}(q)$.
 We have:
$v^{\lambda,\gamma}_{p,q}= \sum_{j=1}^m \frac{1}{\lambda_j + \alpha} \tilde{d}_j \scalT{\tilde{d}_j}{\tilde{\delta}_{p,q}}$.
Hence, we write the gradient of the Sobolev-Fisher critic as
$\nabla_{x}u^{\lambda,\gamma}_{p,q}(x)= \sum_{j=1}^m \frac{1}{\lambda_j + \alpha }  \scalT{\tilde{d}_j}{\tilde{\delta}_{p,q}}  [J\tilde{\Phi}(x)] \tilde{d}_j  =   \sum_{j=1}^m \frac{1}{\lambda_j + \alpha }  \scalT{\tilde{d}_j}{\tilde{\delta}_{p,q}} \nabla_{x} \tilde{d}_j(x),$
where $\tilde{d}_{j}(x)=\scalT{\tilde{d}_j}{ \tilde{\Phi}_q(x)}$.
This says that the mass is transported along a weighted combination  of whitened principal  transport directions $\nabla_x \tilde{d}_j(x)$.
$\alpha$ introduces a damping of the transport as it acts as a spectral filter on the transport directions in the whitened space.

\section{Discrete time Unbalanced Kernel and Neural Sobolev Descent}

In order to get a practical algorithm in this Section we discretize the continuous USD given in Eq.\ \eqref{eq:ContinuousUSD}. We also give an implementation parameterizing the critic as a Neural Network.

\textbf{Discrete Time Kernel USD.}
Recall that the source distribution $q_{0}=q=\sum_{j=1}^n b_j\delta_{y_j}$, note $w^0_j=b_j$ and $x^0_j=y_j, j=1\dots n$. The target distribution $p=\sum_{j=1}^N a_j \delta_{x_j}$, and assume for simplicity $\sum_{j=1}^N a_j=1$. Let $\varepsilon>0$, for $\ell=1\dots L$, for $j=1\dots n$, we discretize the advection step:
$$x^{\ell}_j = x^{\ell-1}_j +\varepsilon \nabla_x u^{\lambda, \gamma}_{p,q_{\ell-1}}(x^{\ell-1}_j ). $$
Let $m_{\ell-1}= \sum_{j=1}^n w^{\ell-1}_ju^{\lambda,\gamma}_{p,q_{\ell-1}}(x^{\ell-1}_j).$
For $\tau >0$, similarly we discretize the reaction step as:
$$a^{\ell}_j =\log(w^{\ell-1}_j)  + \tau (u^{\lambda,\gamma}_{p,q_{\ell-1}}(x^{\ell-1}_j ) - \gamma m_{\ell-1} ). $$ 
If $\gamma=0$ (total mass not conserved) we define the reweighing  as follows: $w^{\ell}_j=\exp(a^{\ell}_j)$
and if $\gamma=1$ (mass conserved):
$w^{\ell}_j=\exp(a^{\ell}_j)/\sum_{i=1}^n \exp(a^{\ell}_i),$
and finally : $q^{\ell} = \sum_{j=1}^n w^{\ell}_j \delta_{x^{\ell}_j}$.

\textbf{Neural Unbalanced Sobolev Descent.} Motivated by the use of neural network critics in Sobolev Descent \citep{SD}, we propose a Neural variant of USD by parameterizing the critic of the Sobolev-Fisher Discrepancy as a Neural network $f_{\xi}$ trained via gradient descent with the Augmented Lagrangian Method (ALM) on the loss function of $\text{SF}$ given in Definition \ref{def:SF}.
The re-weighting is defined as in the kernel case above.
Neural USD with re-weighting is summarized in Algorithm \ref{alg:NSD} in Appendix \ref{app:alg}. 
Note that the re-weighting can also be implemented via a birth-death process as in \cite{deathbirthJoan}.
In this variant, particles are duplicated or killed with a probability driven by the growth rate given by the critic.
We give the details of the implementation as birth-death process in Algorithm \ref{alg:NSDDeathBirth} (Appendix \ref{app:alg}). 

\textbf{Computational and Sample Complexities.} The  computational complexity Neural USD is given by that of updating the witness function and particles by SGD  with backprop, i.e.\ $O(N (T+B))$, where $N$  is the mini-batch size, $T$ is the training time, $B$ is the gradient computation time for particles update. $T$ corresponds to a forward and a backward pass through the critic and its gradient. The sample complexity for estimating the Sobolev Fisher critic scales like $\nicefrac{1}{\sqrt{N}}$ similar to MMD \cite{MMD}.

\section{Relation to Previous Work}
Table \ref{table:summary} in Appendix \ref{app:Table} summarizes the main differences between Sobolev descent \citep{SD}, which only implements advection, and USD that also implements advection-reaction.
Our work is related to the conic particle descent that appeared in \cite{chizat2019sparse}  and \cite{deathbirthJoan}.
The main difference of our approach is that it is not based on the flow of a fixed functional, but we rather learn dynamically the flow that corresponds to the witness function of the Sobolev-Fisher discrepancy.
The accelerated Langevin Sampling of \cite{deathbirthLangevin} also uses similar principles in the transport of distributions via Langevin diffusion and a reaction term implemented as a birth-death process.
The main difference with our work is that in Langevin sampling the log likelihood of the target distribution is required explicitly, while in USD we only need access to samples from the target distribution.
USD relates to  unbalanced optimal transport \citep{chizat2015unbalanced, chizat2018interpolating, chizat2018scaling, sejourne2019sinkhorn} and  offers a computational flexibility when compared to Sinkhorn  approaches  \cite{chizat2018scaling, sejourne2019sinkhorn},  since it scales  linearly in the number of points while Sinkhorn is quadratic. Compared to WFR (Eq.\ \eqref{eq:BenamouWFR}), USD finds greedily the connecting path, while WFR solves an optimal planning problem. 


\section{Applications}
We experiment with USD on synthetic data, image coloring and prediction of developmental stages of scRNA-seq data.
In all our experiments we report the \text{MMD} distance with a gaussian kernel,  computed using  the random Fourier features (RF) approximation \cite{rahimi2007random} with $300$ RF and kernel bandwith equal to $\sqrt{d}$ (the input dimension). We consider the conservation of mass case, i.e.\ $\gamma=1$.    

\paragraph{Synthetic Examples.}

We test Neural USD descent (Algorithms \ref{alg:NSD} and \ref{alg:NSDDeathBirth}) on two synthetic examples.  In the first example (Figure \ref{fig:Gauss2MOG}), the source samples are drawn from a 2D standard Gaussian, while  target samples are drawn from a Mixture of Gaussians (MOG). Samples from this MOG have uniform weights.
In the second example (Figure \ref{fig:cat2heart}), source samples are drawn from a `cat'-shaped density whereas the target samples are drawn uniformly from a `heart'.
Samples from the targets have non-uniform weights following a horizontal gradient.  
In order to target such complex densities USD exploits advection and reaction by following the critic gradients and by creation and destruction of mass. 
We see in Figs \ref{fig:Gauss2MOG} and \ref{fig:cat2heart} a faster mixing of USD in both, implementation with weights \textcolor{blue}{\textbf{(w)}} and as birth-death \textcolor{blue}{\textbf{(bd)}} processes compared to the Sobolev descent algorithm of \cite{SD}. 
\begin{figure*}[ht!]
\begin{subfigure}{.65\textwidth}
  \centering
  \includegraphics[width=0.9\linewidth]{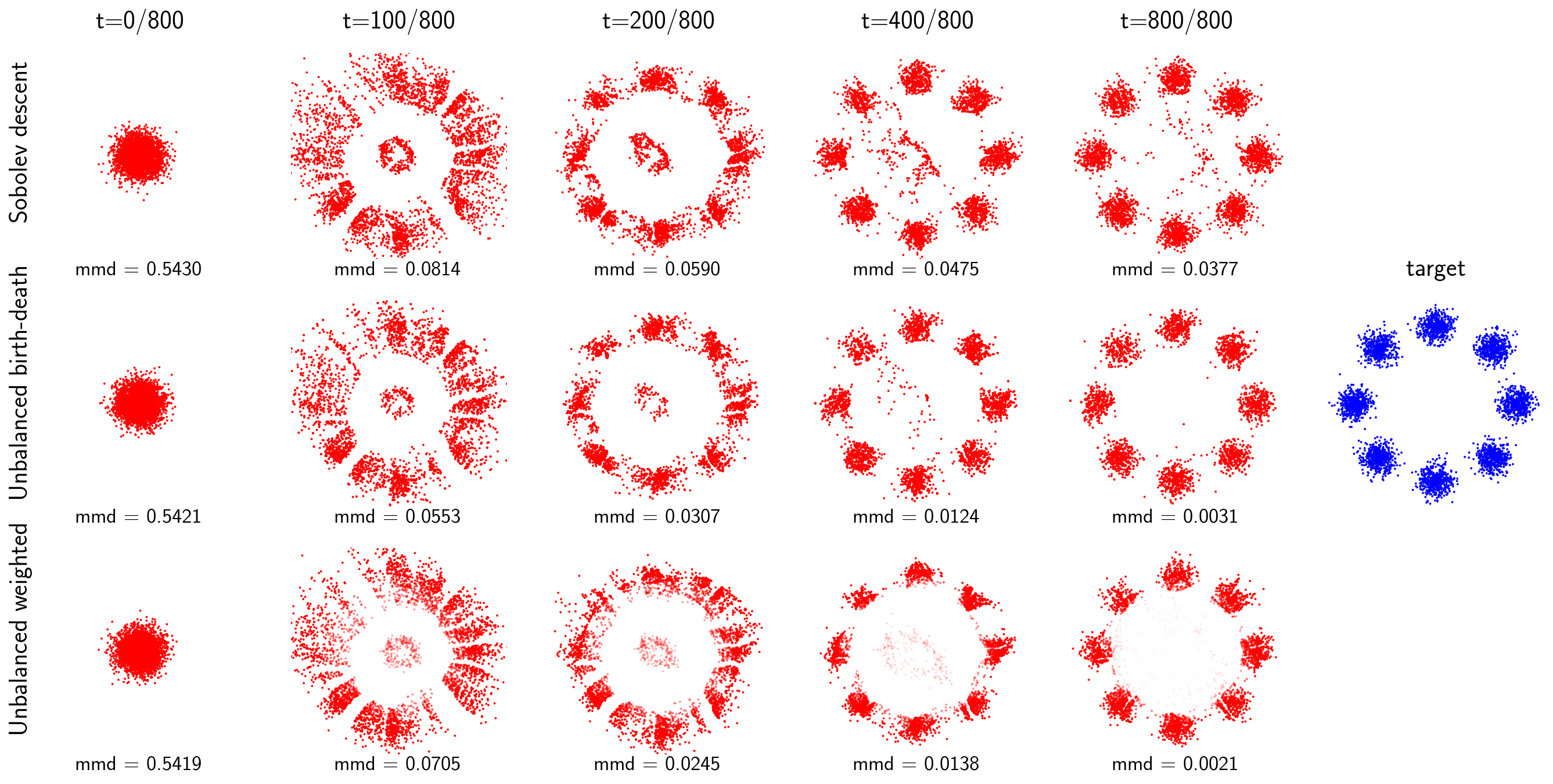}
  \caption{Neural USD paths in transporting a Gaussian to a MOG. We compare Sobolev descent (SD, \cite{SD}) to both USD implementations: with birth-death process (bd: Algorithm \ref{alg:NSDDeathBirth}) and weights (w: Algorithm \ref{alg:NSD}). USD outperforms SD in capturing the modes of the MOG.}
\end{subfigure}%
\hspace*{0.2in}
\begin{subfigure}{.3\textwidth}
\vspace{0.12in}
\centering
  \includegraphics[width=\linewidth]{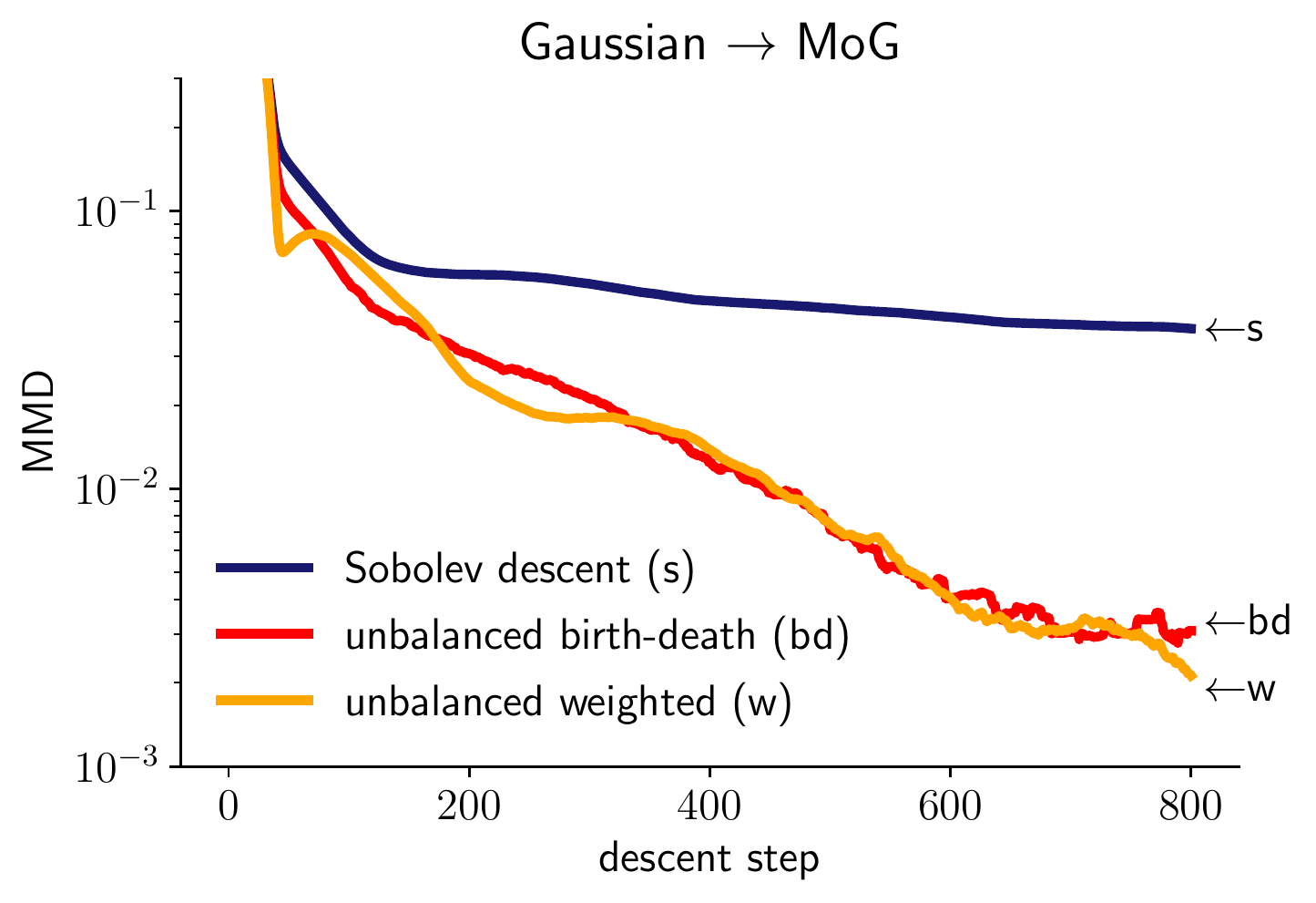}
  \caption{MMD as a function of step along the descent from a Gaussian to a MOG. Both USD implementations convergence faster to the target distribution, reaching lower MMD than Sobolev Descent that relies on advection only.}
\end{subfigure}
\caption{Neural USD transport of a Gaussian to a MOG (target distribution is uniformly weighted).}
\label{fig:Gauss2MOG}
\vskip -0.1in
\end{figure*}
\begin{figure*}[ht!]
\begin{subfigure}{.65\textwidth}
  \centering
  \includegraphics[width=0.9\linewidth]{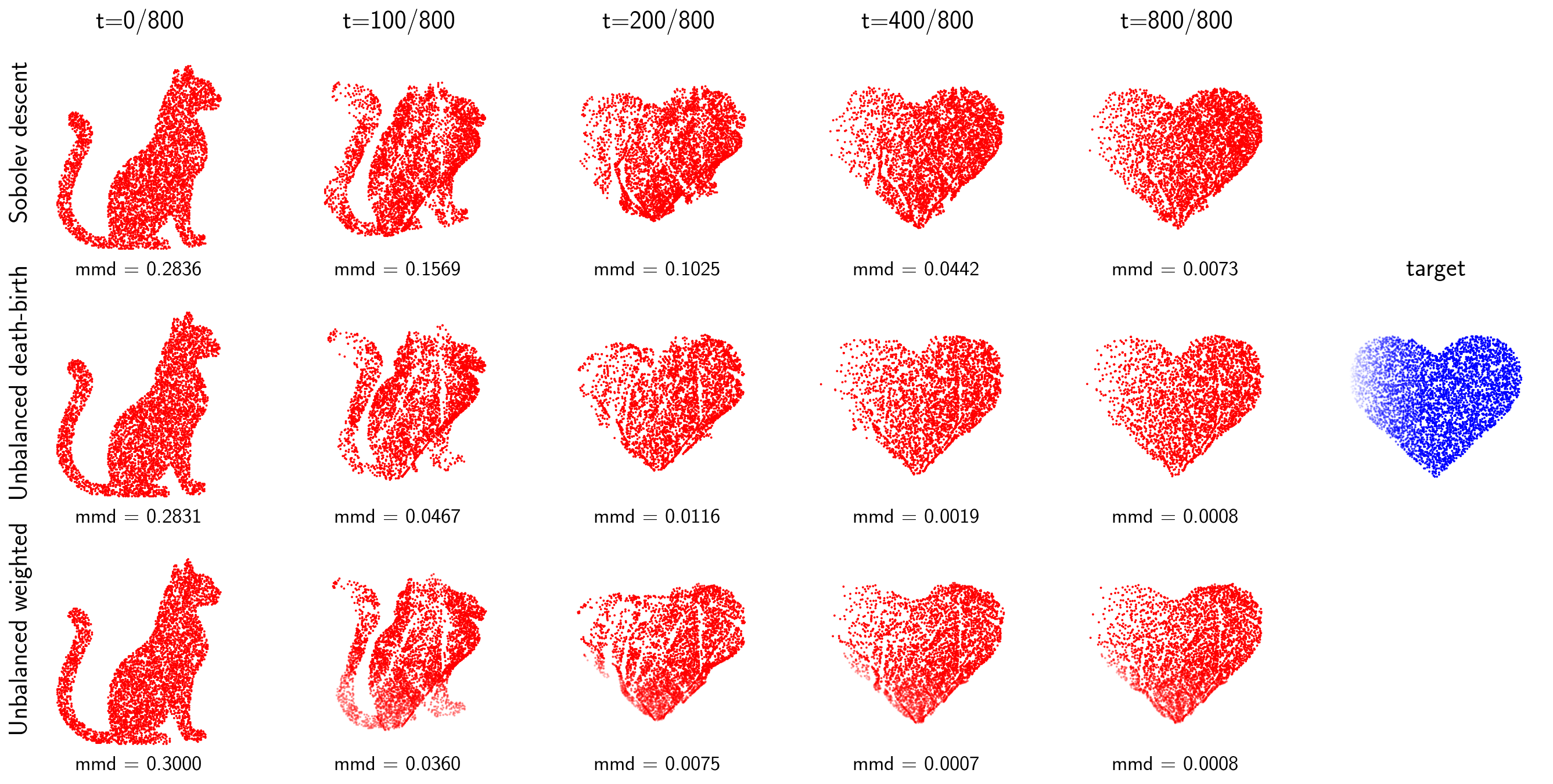}
  \caption{Neural USD transporting a `cat' distributed cloud to a `heart'. The main difference with the example above is that the points of the target distribution have non uniform weights describing a linear gradient as seen from the color code in the figure. Similarly to the MOG case, USD outperforms SD and better captures the non uniform density of the target.}
\end{subfigure}%
\hspace*{0.2in}
\begin{subfigure}{.3\textwidth}
\vspace{0.38in}
\centering
  \includegraphics[width=\linewidth]{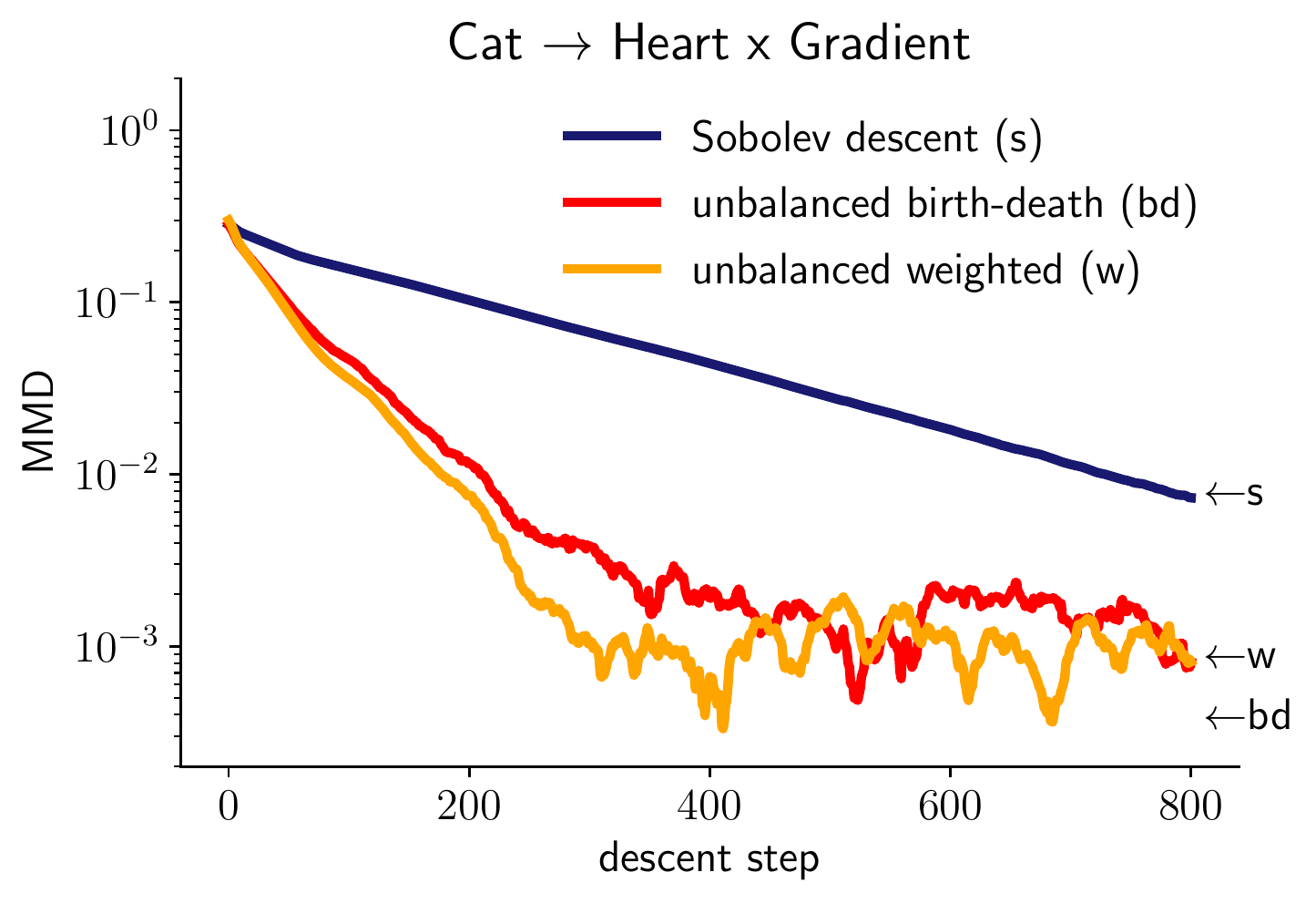}
  \caption{MMD as function of step along the descent from cat $\to$ heart $\times$ Grad. Similarly to the uniform target case USD accelerates the descent and outperforms SD.}
\end{subfigure}
\caption{Neural USD transport of a `cat' to a non-uniform `heart'. Samples from the target distribution have non-uniform weights given by $a_j$'s following a linearly decaying gradient.}
\label{fig:cat2heart}
\vspace{-0.4cm}
\end{figure*}

\paragraph{Image Color Transfer.}

We test Neural USD on the image color transfer task.
We choose target images that have sparse color distributions.
This is a good test for unbalanced transport since intuitively having birth and death of particles accelerates the transport convergence in this case.
We compare USD to standard optimal transport algorithms.
We follow the recipe of \cite{ferradans2013regularized} as implemented in the POT library \citep{flamary2017pot}, where images are subsampled for computational feasibility and then interpolated for out-of-sample points. We compare USD to Earth-Moving Distance (EMD), Sinkhorn \citep{cuturi2013sinkhorn} and Unbalanced Sinkhorn \citep{chizat2018scaling} baselines.
We see in Figure \ref{fig:imageColor}  that USD achieves smaller \text{MMD} to the target color distribution.  We give in Appendix \ref{app:color} in Fig \ref{fig:imageColorsequence} trajectories of the USD. 
\vspace{-0.2em}
\begin{figure*}[ht!]
\begin{subfigure}{\textwidth}
  \centering
  \includegraphics[width=0.9\linewidth]{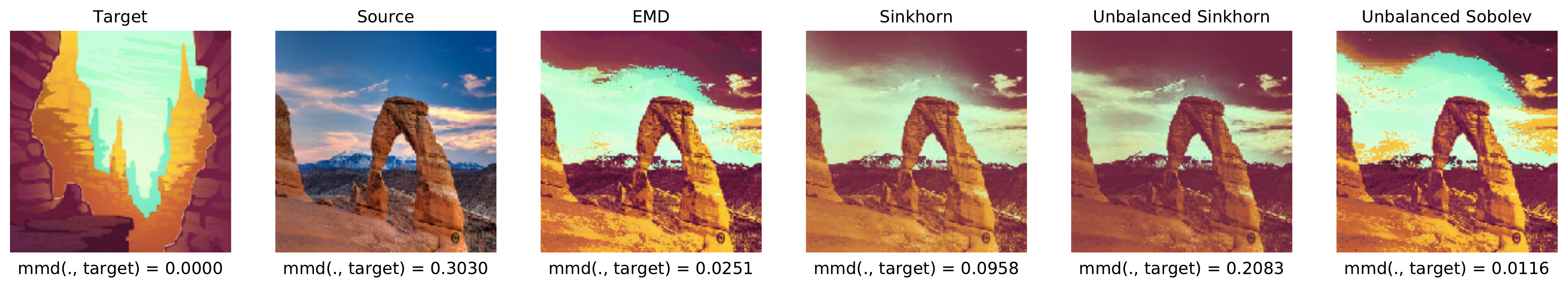}
\end{subfigure}%

\begin{subfigure}{\textwidth}
\centering
  \includegraphics[width=0.9\linewidth]{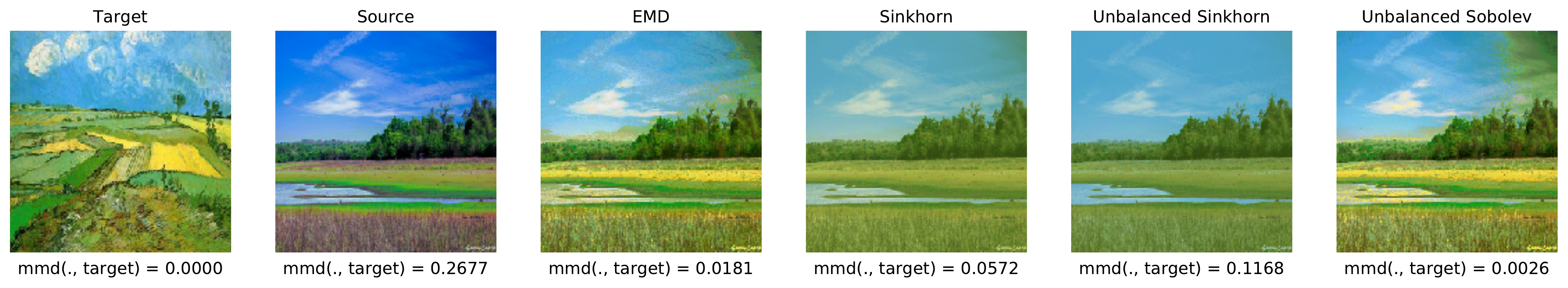}
\end{subfigure}
\caption{Color Transfer with USD using (bd) Algorithm \ref{alg:NSDDeathBirth}. Comparison to OT baselines (EMD, Sinkhorn and Unbalanced Sinkhorn). USD achieves lower MMD, and faithfully captures the sparse distribution of the target.}
\label{fig:imageColor}
\end{figure*}

\paragraph{Developmental Trajectories of Single Cells.}
\begin{figure}[ht!]
    \begin{minipage}[c]{0.35\textwidth}
        \includegraphics[width=1\linewidth]{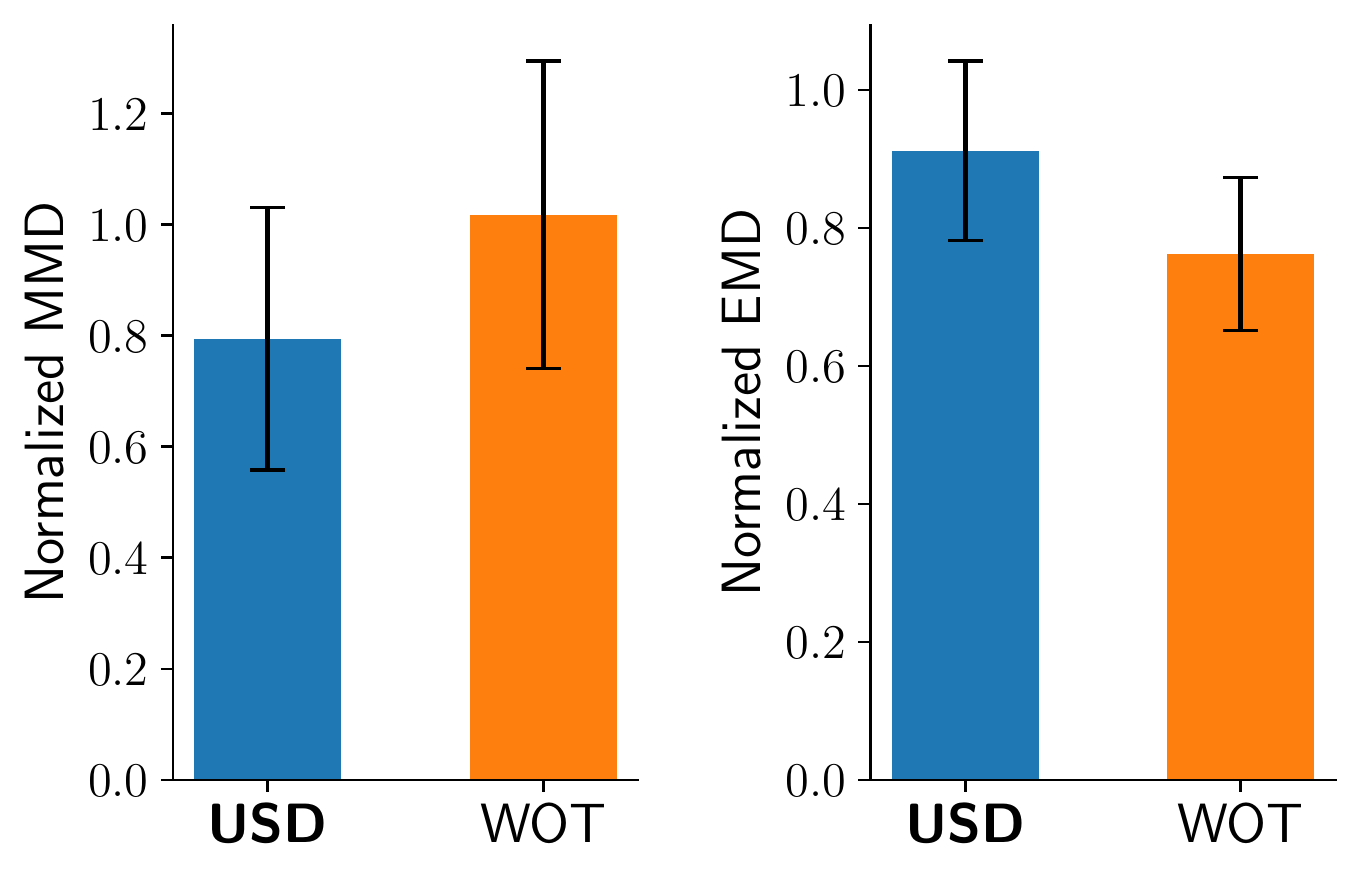}
    \end{minipage}\hfill
    \begin{minipage}[c]{0.60\textwidth}
        \caption{Mean and standard deviations plots of Normalized MMD and EMD for the intermediate stage prediction by USD and WOT (unbalanced OT) of \cite{schiebinger2019optimal} (means and standards deviation are computed over intervals). While USD outperforms WOT in MMD, the reverse holds in EMD. See text for an explanation.
        } \label{fig:wot}
    \end{minipage}
\vskip -0.30in
\end{figure}
When the goal is not only to transport particles but also to find intermediate points along trajectories, USD becomes particularly interesting.
This type of use case has recently received increased attention in developmental biology, thanks to single-cell RNA sequencing (scRNA-seq), a technique that records the expression profile of a whole population of cells at a given stage, but does so destructively.
In order to trace the development of cells in-between such destructive measurements, \cite{schiebinger2019optimal} proposed to use unbalanced optimal transport \citep{chizat2018scaling}.
Denoting those populations $q_{t_0}$ (source) and $q_{t_{1}}$ (target), then, in order to predict the population at an intermediate time $\frac{t_0+t_1}{2}$, \cite{schiebinger2019optimal} used a linear interpolation between matches between the source and target populations based on the coupling of unbalanced OT.
This type of interpolation is a form of McCann interpolate \cite{MCCANN1997}.
As an alternative, we propose to use the mid-point of the USD descent as an interpolate, i.e.\ the timestamp in the descent $t_{\nicefrac{1}{2}}$ such that $\text{MMD}(q_{t_{\nicefrac{1}{2}}},q_{t_0})= \text{MMD}(q_{t_{\nicefrac{1}{2}}},q_{t_1})$.
We test this procedure on the dataset released by \cite{schiebinger2019optimal}.
For all time  intervals $[t_0,t_1]$ in the dataset, we compute the intermediate stage $q_{t_{\nicefrac{1}{2}}}$.
We compare the quality of this interpolate with that obtained by the WOT algorithm of \cite{schiebinger2019optimal} in terms of MMD to the ground truth intermediate population $q^*_{t_{\nicefrac{1}{2}}}$, normalized by MMD between initial and final population, i.e.\ $\text{MMD}(q_{t_{\nicefrac{1}{2}}}, q^*_{t_{\nicefrac{1}{2}}}) /
\text{MMD}(q_{t_0},q_{t_1})$.
Fig.\ \ref{fig:wot} gives mean and standard deviation of the  normalized MMD between intermediate stages predicted by USD and the ground truth.
Note that mean and standard deviations are computed across $35$ time intervals, individual MMDs can be found in Figure \ref{fig:wotsupp} in Appendix \ref{app:plots}. 
From Figure \ref{fig:wot} we see that USD outperforms WOT in MMD, since USD is designed to decrease the MMD distance.
On the other hand, for fairness of the evaluation we also report Normalized EMD (Earth-Mover Distance, normalized similarly) for which WOT outperforms USD.
This is not surprising since WOT relies on unbalance OT, while USD instead provides guarantees in terms of MMD.

\vskip -0.1in
\section{Conclusion}
\vskip -0.15in

In this paper we introduced the KSFD discrepancy and showed how it relates to an advection-reaction transport.
Using the critic of KSFD, we introduced Unbalanced Sobolev Descent (USD) that consists in an advection step that moves particles and a reaction step that re-weights their mass.
The reaction step can be seen as birth-death process which, as we show theoretically, speeds up the descent compared to previous particle descent algorithms.
We showed that the MMD convergence of Kernel USD and presented two neural implementations of USD, using weight updates, and birth and death of particle, respectively.
We empirically demonstrated on synthetic examples and in image color transfer, that USD can be reliably used in transporting distributions, and indeed does so with accelerated convergence, supporting our theoretical analysis.
As a further demonstration of our algorithm, we showed that USD can be used to predict developmental trajectories of single cells based on their RNA expression profile.
This task is representative of a situation where distributions of different mass need to be compared and interpolated between, since the different scRNA-seq measurements are taken on cell populations of dissimilar size at different developmental stages. USD can naturally deal with this unbalanced setting. Finally we compared USD to unbalanced OT algorithms, showing its viability as a data-driven, more scalable dynamic transport method.


\section*{Broader Impact Statement}

Our work provides a practical particle descent algorithm that comes with a formal convergence proof and theoretically guaranteed acceleration over previous competing algorithms. Moreover, our algorithm can naturally handle situations where the objects of the descent are particles sampled from a source distribution descending towards a target distribution with different mass.

The type of applications that this enables range from theoretically principled modeling of biological growths processes (like tumor growth) and developmental processes (like the differentiation of cells in their gene expression space), to faster numerical simulation of advection-reaction systems.

Since our advance is mainly theoretical and algorithmic (besides the empirical demonstrations), its implications are necessarily tied to the utilization for which it is being deployed.
Beside the applications that we mentioned, particle descent algorithms like ours have been proposed as a paradigm to characterize and study the dynamics of Generative Adversarial Network (GANs) training. As such, they could indirectly contribute to the risks associated with the nefarious uses of GANs such as deepfakes. On the other hand, by providing a tools to possibly analyze and better understand GANs, our theoretical results might serve as the basis for mitigating their abuse.

\bibliography{refs,simplex}
\bibliographystyle{unsrt}



\appendix
\onecolumn
\begin{center}
\textbf{Supplementary Material: Unbalanced Sobolev Descent}
\end{center}

\subsection{Relation to Unbalanced Optimal Transport}
We now relate our definition of the Sobolev-Fisher discrepancy to the following norm.
For a signed measure $\chi$ define $\nor{\chi}_{\dot{H}^{-1,2}(\nu)}^2=$
\begin{align*} 
 \sup_{f,\int_{\pazocal{X}}(\nor{\nabla_x f(x) }^2+ \alpha f^2(x)) d\nu \leq 1 }\left|\int f d\chi\right| = \inf_{f, \chi(x)=-div(\nu(x)\nabla_x f(x))+\alpha f(x)\nu(x) } \int_{\pazocal{X}}(\nor{\nabla_x f }^2+ \alpha f^2) d\nu.
\end{align*}
It can be shown that $\text{SF}^2(p,q)= \nor{p-q}_{\dot{H}^{-1,2}(q)}^2$.

The dynamic formulation of the Wasserstein Fisher-Rao metric given in Equation \eqref{eq:BenamouWFR} can therefore be compactly written as: 
\begin{equation}
\text{WFR}^2(p,q)= \inf_{\nu_t}\int_{0}^1\nor{d\nu_{t}}_{\dot{H}^{-1,2}(\nu_{t})}^2.
\label{eq:WFRpaths}
\end{equation}
From this connection to WFR through $\nor{.}_{\dot{H}^{-1,2}(q)}^2$, we see the link of the Sobolev-Fisher discrepancy to unbalanced optimal transport, since it linearizes the WFR for small perturbations.

\section{Summary Table}
\label{app:Table}

\begin{table*}[ht!]
 	\resizebox{1\textwidth}{!}{	\begin{tabular}{|c|c|c|c|c|c|}
 			\hline
 			  & $\alpha$  & $\gamma$ &  \makecell{Markov Process\\ Particles $j=1\dots n$}& PDE (As $n\to \infty$)& \makecell{ Guarantee\\ $\frac{1}{2} \frac{d\text{MMD}^2(p,q_t)}{dt}=$
}  \\
			  \hline
			
			\hline
			
 			\makecell{\textbf{Sobolev Descent}\\ Flow of $\pazocal{S}_{\mathcal{H},\lambda}$\\Target: $p=\frac{1}{N}\sum_{i=1}^N \delta_{x_i}$\\ Source : $q=\frac{1}{n}\sum_{j=1}^n \delta_{y_j} $}	& $0$ & N/A& \makecell{~\\ $dX^j_{t}=\nabla_x u^{\lambda}_{p,q_{t}}(X^j_t)dt$\\ $q_{t}=\frac{1}{n}\sum_{j=1}^n \delta_{X^j_t} $\\ ~\\  Principal Transport Directions:\\  $ dX_{t}=\textcolor{blue}{\sum_{\ell=1}^m \frac{1}{\lambda_{\ell}+\lambda}\scalT{d_{\ell}}{\delta_{p,q_t}} \nabla_{x}d_{\ell}(x)}dt$\\ $(\lambda_j,d_j)=eig(D(q_{t}))$\\}  & \makecell{$\partial_{t}q_t=-div(q_t\nabla_x u^{\lambda}_{p,q_{t}})$\\~\\ Advection}  & \makecell{$-(\text{MMD}^2(p,q_t)-\lambda \pazocal{S}^2_{\mathcal{H},\lambda}(p,q_t))$ } \\
			
			\hline
 			\makecell{~\\ \textbf{Unbalanced Sobolev} \\ \textbf{Descent:} Flow of $\text{SF}_{\mathcal{H},\lambda}$ \\Target: $p=\sum_{i=1}^N a_i \delta_{x_i}$\\ Source : $q=\sum_{j=1}^n b_j\delta_{y_j} $ \\$(\sum_i a_i\neq \sum_j b_j)$\\~}	 & $\alpha>0$	& $\gamma=0$  &\makecell{ ~\\ $dX^j_{t}= \nabla_{x}u^{\lambda,\gamma}_{p,q_{t}}(X^j_t)dt$\\ $d w^j_{t}=\alpha (u^{\lambda,\gamma}_{p,q_{t}}(X^{j}_t)) w^{j}_t dt$\\  $q_{t} = \sum_{i=1}^n w^j_{t} \delta_{X^j_t}$\\ ~\\Whitened Principal Transport Directions : \\ $dX^j_{t}=\textcolor{blue} { \sum_{\ell=1}^m \frac{1}{\tilde{\lambda}_{\ell}+\alpha} \scalT{\tilde{d_{\ell}}}{\tilde{\delta}_{p,q}}\nabla_x \tilde{d}_{\ell}(X^j_t)} dt$ }  & \makecell{$\partial_{t}q_t=-div(q_t\nabla_x u^{\lambda,\gamma}_{p,q_{t}}) +\alpha u^{\lambda,\gamma}_{p,q_{t}}(x) q_{t}$\\~\\ Advection/Reaction\\ (Mass not conserved )}  & \makecell{$-(\text{MMD}^2(p,q_t)-\lambda \text{SF}^2_{\mathcal{H}}(p,q_t))$ }\\
			
			\hline
 			
			\makecell{\textbf{Balanced Sobolev}\\  \textbf{Descent:} Flow of $\overline{\text{SF}}_{\mathcal{H},\lambda}$ \\Target: $p=\sum_{i=1}^N a_i \delta_{x_i}$\\ Source : $q=\sum_{j=1}^n b_j\delta_{y_j} $ \\$(\sum_i a_i= \sum_j b_j)$\\~}	 & $\alpha>0$	& $\gamma=1$   &\makecell{~\\ $dX^j_{t}= \nabla_{x}u^{\lambda,\gamma}_{p,q_{t}}(X^j_t)dt$\\ $d w^j_{t}=\alpha (u^{\lambda,\gamma}_{p,q_{t}}(X^{j}_t) - \mathbb{E}_{q_{t}}u^{\lambda,\gamma}_{p,q_{t}}) w^{j}_t dt$\\  $q_{t} = \sum_{i=1}^n w^j_{t} \delta_{X^j_t}$ \\~\\ Whitened Principal Transport Directions : \\ $dX^j_{t}= \textcolor{blue} { \sum_{\ell=1}^m \frac{1}{\tilde{\lambda}_{\ell}+\alpha} \scalT{\tilde{d_{\ell}}}{\tilde{\delta}_{p,q}}\nabla_x \tilde{d}_{\ell}(X^j_t)} dt$ }    &\makecell{$\partial_{t}q_t=-div(q_t\nabla_x u^{\lambda,\gamma}_{p,q_{t}}) +\alpha( u^{\lambda,\gamma}_{p,q_{t}}(x) -\mathbb{E}_{q_t} u^{\lambda,\gamma}_{p,q_{t}}) q_{t}$\\~\\ Advection/Reaction\\ (Mass conserved )}  & \makecell{$-(\text{MMD}^2(p,q_t)-\lambda \overline{\text{SF}}^2_{\mathcal{H}}(p,q_t))$ } \\
			
			\hline
 		\end{tabular}}
 	 	\caption{Summary table comparing Unbalanced Sobolev Descent to Sobolev Descent.}
 	 	 	\label{table:summary}
 \end{table*}

\section{Algorithms} \label{app:alg}

\begin{algorithm}[ht!]
\caption{ \textcolor{blue}{\textbf{w}-}Neural Unbalanced Sobolev Descent (weighted version -- ALM Algorithm)}
\begin{algorithmic}
 \STATE {\bfseries Inputs:} $\varepsilon,\tau$ Learning rate particles, $n_c$ number of critics updates, $L$ number of iterations, $\gamma \in\{0,1\}$ \\
  $\{(a_i,x_i),i=1\dots N\}$, drawn from target distribution $\nu_p$\\
   $\{(b_j,y_j),j=1\dots n\}$ drawn from source distribution $\nu_q$\\
   Neural critic $f_{\xi}(x)=\scal{v}{\Phi_{\omega}(x)}$, $\xi=(v,\omega)$ parameters of the neural network\\
 \STATE {\bfseries Initialize} $x^0_j=y_j, w^0_j=b_j$ for $j=1\dots n$
 \FOR{ $\ell=1\dots L$}
\STATE \emph{\bfseries Critic Parameters Update}\\
\STATE (between particles updates, gradient descent on the critic is initialized from previous episodes)
\STATE $\xi \gets $ \textsc{Critic Update}($\xi$, target $\{x_i\}$, current source $\{(w^{\ell-1}_j,x^{\ell-1}_j)\}$,$\gamma$ )  (Given in Alg. \ref{alg:updatecritic} in Appendix \ref{app:alg}) \\
 \STATE \emph{\bfseries Particles and Weights Update} \\
 \FOR{$j=1$ {\bfseries to} $n$}
 \STATE $x^{\ell}_j = x^{\ell-1}_j +\varepsilon \nabla_x f_{\xi}(x^{\ell-1}_j ) $ (current $f_{\xi}$ is the critic between $q_{\ell-1}$ and $p$, advection step)
 \STATE $a^{\ell}_j =\log(w^{\ell-1}_j)  + \tau (f_{\xi}(x^{\ell-1}_j ) - \gamma m_{\xi}) $  (reaction step)
 \IF{$\gamma=1$ (mass conservation) }
 \STATE $w^{\ell} = \rm{Softmax}(a^{\ell}) \in \Delta_{n}$
 \ELSIF{$\gamma=0$ (mass not conserved)}
 \STATE $w^{\ell} = \exp(a^{\ell})$
 \ENDIF
 \ENDFOR
 \ENDFOR
 \STATE {\bfseries Output:} $\{(x^L_j, w^{L}_j), j=1\dots n\}$ 
 \end{algorithmic}
 \label{alg:NSD}
\end{algorithm}

\begin{algorithm}[ht!]
\caption{\textcolor{blue}{\textbf{bd}-}Neural Unbalanced Sobolev Descent (Birth-Death -- ALM Algorithm)}
\begin{algorithmic}
 \STATE {\bfseries Inputs:} Same inputs of Algorithm \ref{alg:NSD}
 \STATE {\bfseries Initialize} $x^0_j=y_j, w^0_j=\frac{1}{n}$ for $ j=1\dots n$
 \FOR{ $\ell=1\dots L$}
\STATE \emph{\bfseries Critic Parameters Update}\\
\STATE (between particles updates gradient descent on  the critic is initialized from previous episodes)
\STATE $\xi \gets $ \textsc{Critic Update}($\xi$, target $\{(a_i,x_i)\}$, current source $\{(\frac{1}{n},x^{\ell-1}_j)\}$,$\gamma$ ) (Given in Alg. \ref{alg:updatecritic} in App. \ref{app:alg})  \\
 \STATE \emph{\bfseries Particles and Weights Update (birth-death)}\\
 \FOR{$j=1$ {\bfseries to} $n$}
 \STATE $x^{\ell}_j = x^{\ell-1}_j +\varepsilon \nabla_x f_{\xi}(x^{\ell-1}_j ) $ (current $f_{\xi}$ is the critic between $q_{\ell-1}$ and $p$ )
  \STATE $m_{\xi} \gets \frac{1}{n}\sum_{i=1}^j f_{\xi}(x^{\ell}_i) + \frac{1}{n}\sum_{i=j+1}^{n} f_{\xi}(x^{\ell-1}_i)$
\IF{ $ \beta_j= f_{\xi}(x^{\ell}_j ) - \gamma m_{\xi} >0 $ }
\STATE Duplicate $x^{\ell}_j$ with probability $1-\exp(-\alpha \tau \beta_{j} )$
 \ELSIF{ $ \beta_j= f_{\xi}(x^{\ell}_j ) - \gamma m_{\xi}<0$}
 \STATE kill $x^{\ell}_j$ with probability $1-\exp(-\alpha \tau |\beta_{j}| )$
 \ENDIF
  \ENDFOR
  \COMMENT{Make population size $n$ again}
 \STATE  $n_{\ell}$ number of particles at the end of the loop
 \IF{$n_{\ell}>n$}
 \STATE Kill $n_{\ell}-n$ randomly selected particles
 \ELSIF{$n_{\ell}<n$}
 \STATE Duplicate $n-n_{\ell}$ randomly selected partciles 
\ENDIF
 \ENDFOR
 \STATE {\bfseries Output:} $\{(x^L_j), j=1\dots n\}$ 
 \end{algorithmic}
 \label{alg:NSDDeathBirth}
\end{algorithm}

\begin{algorithm}[H]
\caption{\textsc{Critic Update}($\xi$, target $\{(a_i,x_i)\}$, current source $\{(w^{\ell-1}_j,x^{\ell-1}_j)\}$,$\gamma$) }
\begin{algorithmic}
\FOR{$j=1$ {\bfseries to} $n_c$}
 \STATE $m_{\xi} \gets \sum_{j=1}^n w^{\ell-1}_j f_{\xi}(x^{\ell-1}_j)$
 \STATE  $\hat{\mathcal{E}}(\xi)\gets \sum_{i=1}^N a_i f_{\xi}(x_i) - m_{\xi}$
 \STATE $\hat{\Omega}(\xi)\gets \sum_j w^{\ell-1}_j \nor{\nabla_x f_{\xi}(x^{\ell-1}_j)}^2 + \alpha   \left(\sum_j w^{\ell-1}_j f^2_{\xi}(x^{\ell-1}_j) -\gamma m^2_{\xi}\right) $
 \STATE $\pazocal{L}_{S}(\xi,\lambda)= \hat{\mathcal{E}}(\xi)+ \lambda(1-\hat{\Omega}(\xi))-\frac{\rho}{2}(\hat{\Omega}(\xi)-1)^2$
  \STATE $(g_{\xi},g_{\lambda})\gets (\nabla_{\xi} {\pazocal{L}_{S}},\nabla_{\lambda}\pazocal{L}_{S})(\xi,\lambda) $
 \STATE $ \xi\gets \xi +\eta \text{ ADAM }(\xi,g_{\xi})$\\
 \STATE $\lambda \gets \lambda - \rho g_{\lambda}$ \COMMENT{SGD rule on $\lambda$ with learning rate $\rho$}
 \ENDFOR
\STATE {\bfseries Output:}  $\xi$
\end{algorithmic}
\label{alg:updatecritic}
\end{algorithm}

\section{Proofs}

\begin{proof} [Proof of Theorem \ref{theo:AdvectionReaction}]
Define the following dot product between $u, v $ in the the Sobolev Space: 
$$\scalT{u}{v}_{W^2_0}= \int_{\pazocal{X}} \scalT{\nabla_x u(x)}{\nabla_x v(x)} q(x) + \alpha \int_{\pazocal{X}} u(x)v(x) q(x) dx ,$$
and the norm :
$$\nor{u}^2_{W^2_0}=  \int_{\pazocal{X}} \nor{\nabla_x u(x)}^2 q(x) dx + \alpha \int_{\pazocal{X}} u^2(x) q(x) dx ,$$
Let $f$ be any function such that $f|_{\partial \pazocal{X}=0},$ and $\nor{f}_{W^2_0}\leq 1$:  
\begin{align*} 
\mathcal{E}(f)=&\int_{\pazocal{X}} f(x)(p(x)-q(x)) dx\\
&= - \int_{\pazocal{X}} f(x) div(q(x)\nabla_x u(x))dx + \alpha \int_{\pazocal{X}} u(x)f(x)q(x) \\
&= \int_{\pazocal{X}} \scalT{\nabla_x f(x)}{\nabla_x u(x)} q(x) + \alpha \int_{\pazocal{X}} u(x)f(x) q(x) dx\\
&= \scalT{u}{f}_{W^2_0} \text{ (By definition) }\\
&\leq \nor{u}_{W^2_0}\nor{f}_{W^2_0} \text{ (By Cauchy Schwarz) },\\
&\leq  \nor{u}_{W^2_0} \text{ ($f$ feasible, $\nor{f}_{W^2_0}\leq 1$) }
\end{align*}
Let $f^*_{p,q}= u/\nor{u}_{W^2_0}$, we have $\nor{f^*_{p,q}}_{W^2_0}=1$ and hence feasible, and it is easy to see that :
$$\mathcal{E}(f^*_{p,q})=  \nor{u}_{W^2_0},$$
and hence we have that for all $f$ feasible we have:
$$\mathcal{E}(f)\leq \mathcal{E}(f^*_{p,q}),$$
and hence $f^*_{p,q}$ achieves the $\sup$.
\end{proof}
\begin{proof}[Proof of Proposition \ref{pro:UnconstForm}] This can be easily proved using that $u^*$ solution of the PDE with source term is solution of that sup problem. $L(u^*)= \text{SF}^2(p,q)$ is clear from definition of $u^*$ we are left showing $L(u) \leq L(u*)$ for all $u$, this can be shown by proving that :
$$L(u)-L(u^*)= - \nor{u-u^*}^2_{W^2_0}\leq 0$$ and hence $L(u) \leq L(u^*)$ , hence $u^*$ achieves the sup.
\end{proof}

\begin{proof}[Proof of Theorem \ref{theo:KineticDeathBirth}]
Writing the  Lagrangian $u$ we have:

$$\inf_{V, r} \sup_{u}  \pazocal{L}(V,r,u) = \sup_{u} \inf_{V,r} \pazocal{L}(V,r,u),$$
where By convexity of the cost we exchange $\sup$ and  $\inf$ for
 $  \pazocal{L}(V,r,u) = \frac{1}{2} \int_{\pazocal{X}} \nor{V(x)}^2 q(x)dx + \alpha \frac{1}{2} \int_{\pazocal{X}} r^2(x)q(x)dx + \int_{\pazocal{X}} u(x) (p(x)-q(x)) - \int_{\pazocal{X}} \scalT{\nabla_x u(x)}{V(x)}q(x) -\alpha \int_{\pazocal{X}} r(x)u(x)q(x)$.\\
Note that $\inf_{V}   \int_{\pazocal{X}} \nor{V(x)}^2 q(x)dx  - \int_{\pazocal{X}} \scalT{\nabla_x u(x)}{V(x)}q(x) = -\sup_{V} \int_{\pazocal{X}} \scalT{\nabla_x u(x)}{V(x)}q(x)- \frac{1}{2}\int_{\pazocal{X}} \nor{V(x)}^2 q(x)dx= -\frac{1}{2}\int_{\pazocal{X}} \nor{\nabla_x u(x)}^2q(x)dx (\text{Fenchel Convex}) $. Similarly we have: $\inf_{r} \frac{1}{2}\int_{\pazocal{X}} r^2(x)q(x)dx - \int_{\pazocal{X}} r(x)u(x)q(x)= - \sup_{r}   \int_{\pazocal{X}} r(x)u(x)q(x) -\frac{1}{2}\int_{\pazocal{X}} r^2(x)q(x)dx = -\frac{1}{2}\int_{\pazocal{X}} u^2(x) q(x) dx .$
Hence the dual problem is : 
$$P= \sup_{u} \int_{\pazocal{X}} u(x)(p(x)-q(x)) dx - \frac{1}{2}\left( \int_{\pazocal{X}} \nor{\nabla_x u(x)}^2q(x)dx + \alpha \int_{\pazocal{X}} u^2(x) q(x) dx   \right)$$
 By Proposition 1 ,we have :
 $$P=\frac{1}{2} \text{SF}^2(p,q)$$
Hence $\text{SF}^2(p,q)$ has the equivalent form :
$$\text{SF}^2(p,q)= \inf_{V,r }  \int_{\pazocal{X}} \nor{V(x)}^2 q(x)dx + \alpha \int_{\pazocal{X}} r^2(x)q(x)dx $$
$$\text{ Subject to: }  p(x) - q(x)= -div(q(x)V(x)) + \alpha r(x) q(x)  $$
Since $V^*= \nabla_x u$ and $r^*=u$ we  have finally: 
$$\text{SF}^2(p,q)= \inf_{u} \int_{\pazocal{X}} \nor{\nabla_x u (x)}^2 q(x)dx + \alpha \int_{\pazocal{X}} u^2(x)q(x)dx $$
$$\text{ Subject to: }  p(x) - q(x)= -div(q(x)\nabla_x u(x)) + \alpha u(x) q(x).  $$

\end{proof}

\begin{proof}[Proof of Proposition \ref{pro:RKHScritic}] 
\begin{eqnarray*}
L_{\gamma,\lambda}(u)&=&2( \mathbb{E}_{x\sim p} u(x) - \mathbb{E}_{x\sim q} u(x)) -\left( \mathbb{E}_{x\sim q} [\nor{\nabla_x u(x)}^2 + \alpha(u(x)- \gamma \mathbb{E}_{q} u(x))^2 ]+\lambda \nor{u}^2_{\mathcal{H}}\right)\\
&=& 2\scalT{u}{\mu(p)-\mu(q)}_{\mathcal{H}} - \left( \scalT{u}{D(q) u}_{\mathcal{H}} + \alpha( \mathbb{E}_{q}u^2(x) - \gamma ( \mathbb{E}_{x\sim q} u(x))^2) +\lambda \nor{u}^2_{\mathcal{H}}  \right)\\
&=& 2\scalT{u}{\mu(p)-\mu(q)}_{\mathcal{H}} -   \left( \scalT{u}{D(q) u}_{\mathcal{H}} + \alpha ( \scalT{u}{C(q)u}_{\mathcal{H}}  - \gamma ( \scalT{u}{\mu(q)}_{\mathcal{H}})^2 +\lambda \nor{u}^2_{\mathcal{H}}  \right)\\
&=& 2\scalT{u}{\mu(p)-\mu(q)}_{\mathcal{H}}  - \scalT{u}{\left(D(q)+ \alpha (C(q)- \gamma \mu(q)\otimes \mu(q) )+\lambda I \right)u }_{\mathcal{H}}
\end{eqnarray*}
Setting first order optimality for the sup we obtain: 
$$\left(D(q)+ \alpha (C(q)- \gamma \mu(q)\otimes \mu(q) )+\lambda I \right)u^{\lambda, \gamma}_{p,q}= \mu(p)-\mu(q)=\delta_{p,q}.$$

\end{proof}

\begin{proof}[Proof of Proposition \ref{pro:UnbalancedDescentEvolution}] For simplicity we give here the proof for $\gamma =1$. $\gamma=0$ has a similar proof.
The proof follows ideas from \citep{deathbirthJoan}. 
Let $\Psi$ be a measure valued functional $\Psi: \mathcal{P}(\mathbb{R}^d)\to \mathbb{R}$. For a measure $\mu$, $\Psi(\mu)\in \mathbb{R}$.
The functional derivative $D_{\mu}$ is defined through first variation for a signed measure $\chi$ $(\int \chi(x) dx=0)$:
$$ \int D_{\mu} \Psi(x) \chi(x)dx = \lim_{\varepsilon \to 0} \frac{\Psi(\mu+\varepsilon \chi)- \Psi(\mu)}{\varepsilon}$$

A generator function is defined as follows for a measure valued markov process $\mu^{(n)}_t$ (defined with $n$ particles) is defined as follows:
$$  (\pazocal{L}_{n}\Psi)[\mu^{(n)}] = \lim_{s\to 0^+}\frac{\mathbb{E}_{\mu^{n}_0=\mu^{(n)}}(\Psi [\mu^{(n)}_s])- \Psi(\mu^{(n)})}{s} $$

where $$\mathbb{E}_{\mu^{n}_0=\mu^{(n)}}(\Psi [\mu^{(n)}_s]), $$
is the expectation of the functional $\Psi$ evaluated on the trajectory of the markov process $\mu^{(n)}_s$ taken on conditional on the initial step $\mu^{(n)}_0=\mu^{(n)}$.

\begin{enumerate}
\item Given our markov process i.e $\mu^{(n)}_{t}$ and $\mu^{(n)}_0$ we  find the expression of the generator $\pazocal{L}_{n}\Psi[\mu^{(n)}] $ (using pertrubation analysis )
\item Since the process is markovian letting $t\to 0$ and considering the generator it will give us the evolution also between $t$ and $t+dt$ of $\Psi[\mu^{(n)}_t]$:
$$\partial_{t} \Psi(\mu^{(n)}_t)=  (\pazocal{L}_{n}\Psi)[\mu^{(n)}_{t}], \Psi (\mu^{(n)}_{t})|_{t=0}=\Psi (\mu^{(n)}_{0}) $$
\item Consider $n\to \infty$, identify the PDE corresponding  to the generator
\end{enumerate}

As $s\to0$, and $\varepsilon \to 0$, we have:
$$E_{0}\Psi(q^{n}_s)- q^{n}= \underbrace{ E_{0}\Psi(q^{n}_s) -E_0 q^{n}_{s-\varepsilon}}_{\text{weights updates}}  + \underbrace{E_0 q^{n}_{s-\varepsilon}  - q^{n}}_{\text{advection}}$$

The advection part: 
\begin{align*}
A_n \Psi [q^{n}]&= \sum_{j=1}^n w_{j}\int \scalT{ \nabla_{x} u_{p,q^{(n)}}(X^j) \delta_{X_j}(dx)}{\nabla_x D_{q^{n}}\Psi (X^j)}\\
&= \int \scalT{\nabla_x u_{p,q^{n}}(x)}{\nabla_x D_{q^{n}} \Psi (x) }) q^n(dx)
\end{align*}
For the weight update part note that we have:
$$w^{j}_{s}= w^{j}_{s-\varepsilon}+\varepsilon \alpha  (u_{p,q^{n}_{s-\varepsilon}}(X^{j}_{s-\varepsilon})- \mathbb{E}_{q^{(n)}_{s-\varepsilon}} u_{p,q^{n}_{s-\varepsilon}}) w^{j}_{s-\varepsilon}  $$
$$q^{n}_{s}= \sum_{j=1}^N w^{j}_s \delta_{X^j_{s-\varepsilon}} $$
$$q^{n}_{s}= q^n_{s-\varepsilon}+ \varepsilon' \alpha \sum_{j=1}^n w^j_{s-\varepsilon} (u_{p,q^{n}_{s-\varepsilon}}(X^{j}_{s-\varepsilon})- \mathbb{E}_{q^{(n)}_{s-\varepsilon}} u_{p,q^{n}_{s-\varepsilon}}) \delta_{X^j_{s-\varepsilon}}  $$
Hence we have:
$$\frac{q^{n}_{s}(x)- q^{n}_{s-\varepsilon}(x)}{\varepsilon'}= \alpha (u_{p,q^{n}_{s-\varepsilon}}(x)- \mathbb{E}_{q^{(n)}_{s-\varepsilon}} u_{p,q^{n}_{s-\varepsilon}}) q^{n}_{s-\varepsilon}(x)= \chi$$
Hence the variation of $\Phi$:
$$\lim _{\varepsilon'\to 0} \frac{\Psi(q^{n}_{s})- \Psi(q^{n}_{s-\varepsilon})}{\varepsilon'}=\int D_{q^{n}_{s-\varepsilon}} \Psi(x) d\chi(x)=  \alpha \int D_{q^{n}_{s-\varepsilon}} \Psi(x)(u_{p,q^{n}_{s-\varepsilon}}(x)- \mathbb{E}_{q^{(n)}_{s-\varepsilon}} u_{p,q^{n}_{s-\varepsilon}}) q^{n}_{s-\varepsilon}(x) dx $$

As $s,\varepsilon \to 0 $ we obtain the effect of weights updates as follows:

$$W_n \Psi [q^{n}]=  \alpha \int D_{q^{n}} \Psi(x)(u_{p,q^{n}}(x)- \mathbb{E}_{q^{(n)}} u_{p,q^{n}}) q^{n}(x) dx$$

Hence the Generator has the following form:

$$ (\pazocal{L}_{n}\Psi)[q^{(n)}]=  \int \scalT{\nabla_x u_{p,q^{n}}(x)}{\nabla_x D_{q^{n}} \Psi (x) }) q^n(dx)+\alpha \int D_{q^{n}} \Psi(x)(u_{p,q^{n}}(x)- \mathbb{E}_{q^{(n)}} u_{p,q^{n}}) q^{n}(x) dx  $$

and we have: 
$$ \partial_{t} \Psi [q^n_{t}]= (\pazocal{L}_{n}\Psi)[q^{(n)}_t] , with q^{(n)}_0=q$$
As $n\to \infty$ we have the evolution of the PDE:
$$\partial_{t} q_{t}= -div(q(x)\nabla_{x}u_{p,q_{t}})+\alpha (u_{p,q_{t}}- \mathbb{E}_{q_{t}} u_{p,q_{t}})$$
and 
$$ \partial_{t} \Psi [q_{t}]= (\pazocal{L}\Psi)[q_t],$$
where
$\pazocal{L} \Psi(q)= \int \scalT{ \nabla_x u_{p,q}(x) }{ \nabla_x D_{q} \Psi (x) } q(dx)+\alpha \int D_{q} \Psi(x)(u_{p,q}(x)- \mathbb{E}_{q} u_{p,q}) q(x) dx. $

\end{proof}

\begin{proof}[Proof of Theorem \ref{theo:mmd_decrease} (Decrease of the \text{MMD} loss of the (Continous) Gradient Flow)]
For $u^{\gamma, \lambda}_{p,q_t}$ we omit the up-scripts $\gamma$ and $\lambda$ in the following.
Note that we have the following two expressions using the fact our functions are in the RKHS:
\begin{eqnarray*}
\int \scalT{ \nabla_x u_{p,q_{t}}(x) }{ \nabla_x \delta_{p,q_{t}} } q_{t}(dx)&=&\int \scalT{ u_{p,q_{t}} }{  (J\Phi(x))^{\top}J\Phi(x) \delta_{p,q_{t}} } q_{t}(dx)\\
&=&\scalT{u_{p,q_{t}}}{\mathbb{E}_{q_{t}}(J\Phi(x))^{\top} (J\Phi(x)) \delta_{p,q_{t}} }\\
&=&\scalT{u_{p,q_{t}}}{D(q_{t})\delta_{p,q_{t}} }.
\end{eqnarray*}
On the other hand:
\begin{eqnarray*}
 &&\int \delta_{p,q_{t}}(x)(u_{p,q_{t}}(x)- \gamma \mathbb{E}_{q_{t}} u_{p,q_{t}}) q_{t}(x) dx \\
  &=&\int \scalT{\delta_{p,q_{t}}}{\Phi(x)} \scalT{\Phi(x)- \gamma \mu(q_{t})}{u_{p,q_{t}}} q_{t}(x)dx\\
 &=& \int \scalT{\delta_{p,q_{t}}}{\Phi(x) -\gamma \mu(q_{t})} \scalT{\Phi(x)- \gamma \mu(q_{t})}{u_{p,q_{t}}} q_{t}(x)dx \\
&+&  \gamma \int \scalT{\delta_{p,q_{t}}}{ \mu(q_{t})} \scalT{\Phi(x)- \gamma \mu(q_{t})}{u_{p,q_{t}}} q_{t}(x)dx\\
&=&\scalT{\delta_{p,q_{t}}}{(\int (\Phi(x)-\gamma \mu(q_{t}))\otimes (\Phi(x)-\gamma \mu(q_{t})) q_{t}(dx)) u_{p,q_{t}} } \\
&+& \gamma \scalT{\delta_{p,q_{t}}}{ \mu(q_{t})} \int\scalT{\Phi(x)-\gamma \mu(q_{t})}{u_{p,q_{t}}} q_{t}(x)dx\\
&=& \scalT{\delta_{p,q_t }}{C_{\gamma}(q_{t}) u_{p,q_t }}  + \underbrace{\gamma \scalT{\delta_{p,q}}{ \mu(q_{t})} \scalT{\mu(q_{t})- \gamma \mu(q_{t})}{u_{p,q_{t}}}}_{=0 , \text{ for } \gamma \in \{0,1\}}\\
&=& \scalT{\delta_{p,q_t}}{C_{\gamma}(q_t) u_{p,q_{t}}} +0.
\end{eqnarray*}
Consider $\Psi(q)= \frac{1}{2}\text{MMD}^2(p,q)= \frac{1}{2}\nor{\mu(p)-\mu(q)}^2$, it is easy to see that the functional derivative wrt to $q$ is $D_{q}\Psi(q)(x)= -\delta_{p,q}$.
Hence we have:
\begin{align*}
\frac{1}{2}\frac{d \text{MMD}^2(p,q_{t})}{dt}&= - \int \scalT{ \nabla_x u_{p,q_{t}}(x) }{ \nabla_x \delta_{p,q_{t}} } q_{t}(x)dx - \alpha \int \delta_{p,q_{t}}(x)(u_{p,q_{t}}(x)- \gamma \mathbb{E}_{q_{t}} u_{p,q_{t}}) q_{t}(x) dx \\
& = -\scalT{\delta_{p,q_{t}}}{D(q_{t})u_{p,q_{t}}} - \alpha \scalT{\delta_{p,q_{t}}}{ C_{\gamma}(q_{t}) u_{p,q_{t}}}\\
& = -\scalT{\delta_{p,q_{t}}}{(D(q_{t})+ \alpha {C}_{\gamma}(q_{t}) + \lambda I - \lambda I ) u_{p,q_{t}}}\\
& = - (\scalT{\delta_{p,q_{t}}}{(D(q_{t})+ \alpha {C}_{\gamma}(q_{t}) + \lambda I)u_{p,q_{t}}  } - \lambda \scalT{\delta_{p,q_{t}}}{u_{p,q_{t}}})\\
&= -\left(\scalT{\delta_{p,q_{t}}}{\delta_{p,q_{t}}} - \lambda \scalT{\delta_{p,q_{t}}}{u_{p,q_{t}}}\right) \text{where we used that  }  (D(q_{t})+ \alpha {C}_{\gamma}(q_{t}) + \lambda I)u_{p,q_{t}}=\delta_{p,q_{t}}\\
& = - \left(\text{MMD}^2(p,q_{t})- \lambda \text{SF}^2_{\mathcal{H},\gamma,\lambda}(p,q_t)\right) \text{by Definition of Sobolev-Fisher Distance }\\
&\leq 0
\end{align*}
since $$\text{MMD}^2(p,q_{t})\geq \lambda \text{SF}^2_{\mathcal{H},\gamma,\lambda}(p,q_t)$$


\end{proof}

We now prove a Lemma the can be used to show that Unbalanced Sobolev descent has an acceleration advantage over Sobolev descent \citep{SD}.

\begin{lemma}
\label{lem:acceleration}
In the regularized case $\lambda>0$ with $\alpha>0$, the Kernel Sobolev-Fisher Discrepancy $\text{SF}_{\mathcal{H},\gamma,\lambda}$ is strictly upper bounded by the Kernel Sobolev discrepancy $\mathcal{S}_{\mathcal{H},\lambda}$ \citep{SD}: 
\begin{equation*}
    \text{SF}^2_{\mathcal{H},\gamma,\lambda}(p,q) < \mathcal{S}^2_{\mathcal{H},\lambda}(p,q).
\end{equation*}
\end{lemma}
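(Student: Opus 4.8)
The plan is to reduce both discrepancies to quadratic forms in the single vector $\delta_{p,q}$, compare the two positive definite operators that appear, and then upgrade the resulting $\le$ to a strict inequality via the variational characterisation.

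First I would invoke Proposition~\ref{pro:RKHScritic} to write both quantities in closed form,
$$\text{SF}^2_{\mathcal{H},\gamma,\lambda}(p,q)=\scalT{\delta_{p,q}}{(D(q)+\alpha C_{\gamma}(q)+\lambda I_m)^{-1}\delta_{p,q}},\qquad \mathcal{S}^2_{\mathcal{H},\lambda}(p,q)=\scalT{\delta_{p,q}}{(D(q)+\lambda I_m)^{-1}\delta_{p,q}},$$
the second being exactly the $\alpha=0$ instance of the first (cf. Remark (c) after Proposition~\ref{pro:RKHScritic}). Next I would record that $C_{\gamma}(q)\succeq 0$ for both values of $\gamma$: for $\gamma=0$, $C_0(q)=C(q)=\mathbb{E}_{x\sim q}\Phi(x)\otimes\Phi(x)\succeq 0$, and for $\gamma=1$, $C_1(q)=\mathbb{E}_{x\sim q}(\Phi(x)-\mu(q))\otimes(\Phi(x)-\mu(q))\succeq 0$. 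Since $\lambda>0$ and $D(q)\succeq 0$, both $A:=D(q)+\lambda I_m$ and $A+\alpha C_{\gamma}(q)$ are symmetric positive definite with $A\preceq A+\alpha C_{\gamma}(q)$, so anti-monotonicity of the matrix inverse gives $(A+\alpha C_{\gamma}(q))^{-1}\preceq A^{-1}$ and hence $\text{SF}^2_{\mathcal{H},\gamma,\lambda}(p,q)\le\mathcal{S}^2_{\mathcal{H},\lambda}(p,q)$.

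For the strict inequality I would pass to the variational forms. Writing $L_0(u)=2(\mathbb{E}_{x\sim p}u(x)-\mathbb{E}_{x\sim q}u(x))-(\mathbb{E}_{x\sim q}\nor{\nabla_x u(x)}^2+\lambda\nor{u}^2_{\mathcal{H}})$ for the Sobolev loss, we have $L_{\gamma,\lambda}(u)=L_0(u)-\alpha\,\scalT{u}{C_{\gamma}(q)u}$ with $\scalT{u}{C_{\gamma}(q)u}=\mathbb{E}_{x\sim q}(u(x)-\gamma\mathbb{E}_{x\sim q}u(x))^2\ge 0$. Because $\lambda>0$, both $L_0$ and $L_{\gamma,\lambda}$ are strictly concave, with unique maximisers $u^{\lambda}_{p,q}=(D(q)+\lambda I_m)^{-1}\delta_{p,q}$ and $u^{\lambda,\gamma}_{p,q}$, and $\mathcal{S}^2_{\mathcal{H},\lambda}(p,q)=L_0(u^{\lambda}_{p,q})$. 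Now $\text{SF}^2_{\mathcal{H},\gamma,\lambda}(p,q)=\sup_u L_{\gamma,\lambda}(u)\le\sup_u L_0(u)=\mathcal{S}^2_{\mathcal{H},\lambda}(p,q)$; if equality held, then $L_0(u^{\lambda,\gamma}_{p,q})\ge L_{\gamma,\lambda}(u^{\lambda,\gamma}_{p,q})=\mathcal{S}^2_{\mathcal{H},\lambda}(p,q)=\sup_u L_0(u)$, which forces $u^{\lambda,\gamma}_{p,q}=u^{\lambda}_{p,q}$ by uniqueness, and then $\alpha\,\scalT{u^{\lambda}_{p,q}}{C_{\gamma}(q)u^{\lambda}_{p,q}}=L_0(u^{\lambda}_{p,q})-L_{\gamma,\lambda}(u^{\lambda}_{p,q})=0$, i.e. $C_{\gamma}(q)u^{\lambda}_{p,q}=0$. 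Hence the strict inequality holds precisely when the Sobolev critic $u^{\lambda}_{p,q}$ is not annihilated by $C_{\gamma}(q)$ — equivalently, $u^{\lambda}_{p,q}$ is not $q$-a.e. zero ($\gamma=0$) or not $q$-a.e. constant ($\gamma=1$).

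The main obstacle is exactly this last point: the strict-inequality claim genuinely fails in degenerate situations (for instance $\gamma=1$ with $q$ a Dirac mass, where $C_1(q)=0$ and the two discrepancies coincide), so the clean argument above needs the mild non-degeneracy hypothesis $C_{\gamma}(q)u^{\lambda}_{p,q}\neq 0$, which is implied by the same kernel/target condition invoked for strict descent after Theorem~\ref{theo:mmd_decrease} and holds whenever $p\neq q$ and $q$ is not concentrated at a single point; I would state and use the lemma under that assumption. A cosmetic matrix-level alternative to the variational step is the congruence identity $A^{-1}-(A+\alpha C_{\gamma}(q))^{-1}=A^{-1/2}\bigl(I_m-A^{1/2}(A+\alpha C_{\gamma}(q))^{-1}A^{1/2}\bigr)A^{-1/2}$: the bracketed matrix is $\succeq 0$, and $A^{-1/2}\delta_{p,q}$ lies in its kernel iff $C_{\gamma}(q)A^{-1}\delta_{p,q}=0$, recovering the identical condition.
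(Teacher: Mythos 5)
Your proposal is correct, and it takes a genuinely different route from the paper. The paper proves the lemma by applying the Woodbury identity $(A+B)^{-1}=A^{-1}-(A+AB^{-1}A)^{-1}$ with $A=D(q)+\lambda I_m$ and $B=\alpha C_{\gamma}(q)$, asserting that $B$ is symmetric positive definite so that the correction term $E$ is positive definite and the strict inequality follows; this silently requires $C_{\gamma}(q)$ to be invertible (and $\delta_{p,q}\neq 0$), which, as you observe, can fail: for $\gamma=1$ the centered covariance is only positive semidefinite and is identically zero when $q$ is a Dirac, and empirical covariances built from fewer particles than features are always singular. Your argument instead obtains the non-strict bound from anti-monotonicity of the matrix inverse using only $C_{\gamma}(q)\succeq 0$, and then pins down the equality case exactly via the strictly concave variational problems (or, equivalently, your congruence identity), isolating the precise non-degeneracy condition $C_{\gamma}(q)u^{\lambda}_{p,q}\neq 0$ under which strictness holds. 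This is more careful than the paper's proof and correctly identifies a mild but genuine gap in the lemma as stated. One caveat on a side remark: your claim that this condition is implied by the strict-descent condition $\delta_{p,q}\notin\mathrm{Null}(D(q)+\alpha C_{\gamma}(q))$, or by ``$p\neq q$ and $q$ not concentrated at a point,'' is not justified — neither hypothesis prevents the Sobolev critic from being $q$-a.e.\ zero ($\gamma=0$) or $q$-a.e.\ constant ($\gamma=1$) on the support of $q$ — so the clean statement is to assume explicitly $C_{\gamma}(q)u^{\lambda}_{p,q}\neq 0$ (or $C_{\gamma}(q)\succ 0$ together with $\delta_{p,q}\neq 0$, which recovers the paper's implicit assumptions).
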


\begin{proof}
Recall that (see Proposition \ref{pro:RKHScritic}):

$$\text{SF}^2_{\mathcal{H},\gamma,\lambda}(p,q)=\scalT{(D+\alpha C_{\gamma}+ \lambda I_m)^{-1}\delta_{p,q}}{\delta_{p,q}},$$

and that (see \cite{SD}):
$$\mathcal{S}^2_{\mathcal{H},\lambda}(p,q)=\scalT{(D+ \lambda I_m)^{-1}\delta_{p,q}}{\delta_{p,q}}.$$

We now make use of the \emph{Woodbury identity} $(A+B)^{-1} = A^{-1} - (A + A B^{-1} A)^{-1}$ with $A=D+\lambda I_m$ and $B=\alpha C_{\gamma}$,
which allows us to write:

\begin{equation}
    (D+\alpha C_{\gamma}+ \lambda I_m)^{-1} = (D+\lambda I_m)^{-1} - E,
\label{eq:wood}
\end{equation}
where $E=(A + A B^{-1} A)^{-1}$.

Notice that $A=D+\lambda I_m$ and $B=\alpha C_{\gamma}$ are both symmetric positive definite (SPD).
Because the inverse of a SPD matrix is itself a SPD matrix, $B^{-1}$ is SPD.
Because the product of SPD matrices is itself SPD, $A B^{-1} A$ is SPD.
Because the inverse of the sum of SPD matrices is itself SPD, $E$ is SPD.

Equation \eqref{eq:wood} then implies:
$$(D+\alpha C_{\gamma}+ \lambda I_m)^{-1} \prec (D+\lambda I_m)^{-1},$$
which, together with the definitions of $\text{SF}^2_{\mathcal{H},\gamma,\lambda}$ and $\mathcal{S}^2_{\mathcal{H},\lambda}$, concludes the proof.
\end{proof}
\section{Unbalanced Sobolev Descent With a Universal Kernel }\label{sec:infinite}
While we presented the paper in a finite dimensional RKHS, to ease the presentation. We show in this Section, that  our theory is general and apply to the infinite dimensional case. Of interest to us, is the case of a universal kernel. The convergence in $\mathrm{MMD}$  for a universal kernel implies the weak convergence in the distributional sense.

\subsection{ Kernel Mean Embeddings, Covariance and Grammian of Derivatives Operators  }
Let $\mathcal{H}$ be a Reproducing Kernel Hilbert Space with an associated kernel $k:\pazocal{X}\times \pazocal{X}\to \mathbb{R}^{+}$. We make the following assumptions on $\mathcal{H}$ as in \cite{SD}:
\begin{enumerate}
\item[A1] There exists $\kappa_1<\infty$ such that $\sup_{x\in \pazocal{X}} \nor{k_x}_{\mathcal{H}}<\kappa_1$.
\item [A2] The kernel is $C^2(\pazocal{X}\times \pazocal{X})$ and there exists $\kappa_2<\infty$ such that for all $a=1\dots d$:\\
$\sup_{x\in \pazocal{X}} Tr((\partial_a k )_x\otimes (\partial_a k )_x )<\kappa_2$.
\item [A3] $\mathcal{H}$ vanishes on the boundary (assuming $\pazocal{X}=\mathbb{R}^d$ it is enough to  have for $f$ in $\mathcal{H}$  $\lim_{\nor{x}\to \infty} f(x)=0$).
\end{enumerate}  
 The reproducing property give us that $f(x)=\scalT{f}{k_x}_{\mathcal{H}}$ moreover $(D_a f)(x)=\frac{\partial}{\partial x_a}f(x)=\scalT{f}{(\partial_ak)_x}_{\mathcal{H}} $, where $(\partial_a k)_x(t)=\scalT{\frac{\partial k(s,.)}{\partial s_a}\big|_{s=x}}{k_t}$. Note that those two quantities  ($f(x)$ and $(D_a f)(x)$) are well defined and bounded thanks to assumptions A1 and A2  \cite{derivativesRKHS}. \\
Similar to finite dimensional case we define the  Gramian  of derivatives operator of a distribution $q$ :
\begin{equation}
D(q)=  \mathbb{E}_{x\sim \nu_q} \sum_{a=1}^d (\partial_a k)_x\otimes (\partial_a k)_x ~ D(\nu_q)\in \mathcal{H}\otimes \mathcal{H}
\label{eq:Dop}
\end{equation}
The Kernel mean embedding is defined as follows:
\begin{equation}
\mu(p)=\mathbb{E}_{x\sim \nu_p}k_{x} \in \mathcal{H}.
\label{eq:kme}
\end{equation}
The covariance operator is defined as follows for $\gamma \in \{0,1\}$:
\begin{equation}
C_{\gamma}(q)=\mathbb{E}_{x\sim q} k_{x}\otimes k_{x} -\gamma\mu(q)\otimes \mu(q) 
\label{eq:Cop}
\end{equation}
\subsection{Regularized Kernel Sobolev Fisher Discrepancy }
Let $\lambda>0,\alpha \geq 0$, similarly the Kernel Sobolev Fisher Discrepancy has the following form:
$$ \text{SF}^2_{\mathcal{H},\gamma,\lambda}(p,q)=\nor{(D(q)+\alpha C_{\gamma}(q)+\lambda I)^{-\frac{1}{2}} (\mu(\nu_p)-\mu(\nu_q))}^2_{\mathcal{H}},$$
where $D(q),\mu(q), C_{\gamma}(q)$ are defined in Equations \eqref{eq:Dop},\eqref{eq:kme} and \eqref{eq:Cop} respectively.  
The Sobolev Fisher witness function  is defined as follows:
$$u^{\lambda,\gamma}_{p,q}=(D(q)+\alpha C_{\gamma}(q)+\lambda I)^{-1}(\mu(\nu_p)-\mu(\nu_q)) \in \mathcal{H}$$
its evaluation function is 
$$u^{\lambda,\gamma}_{p,q}(x)=\scalT{(D(\nu_q)+\lambda I)^{-1}(\mu(\nu_p)-\mu(\nu_q))}{k_x}_{\mathcal{H}}$$ and its derivatives for $a=1\dots d$:
$$\partial_{a} u^{\lambda,\gamma}_{p,q}(x)=\scalT{(D(\nu_q)+\lambda I)^{-1}(\mu(\nu_p)-\mu(\nu_q))}{\partial_{a}k_x}_{\mathcal{H}}.$$
\subsection{USD with Infinite dimensional Kernel decreases the MMD 
distance}
Theorem \ref{theo:mmd_decrease}  holds for the infinite dimensional case. To see that it is enough to replace in the proof of Theroem \ref{theo:mmd_decrease} finite dimensional operators and embeddings $D(q),C_{\gamma}(q),\mu(q)$ with their infinite dimensional counterparts given in Equation in Equations \eqref{eq:Dop},\eqref{eq:kme} and \eqref{eq:Cop}. All norms and dot products in $\mathbb{R}^m$, are also to be replaced with $\nor{.}_{\mathcal{H}}$ and $\scalT{.}{.}_{\mathcal{H}}$. 


\section{Code and Hyper-parameters}
\begin{lstlisting}[language=Python, caption={Pytorch code for computing cost function $\pazocal{L}_{S}(\xi,\lambda)$ in Algorithm \ref{alg:updatecritic}}]
import torch
from torch.autograd import grad

def descent_cost(f, x_p, w_p, x_q, w_q, lambda_aug, alpha, rho, gamma=1):
    """Computes the objective of Unbalance Sobolev Descent and returns the loss = -obj
    """
    x_q.requires_grad_(True)

    f_p, f_q = f(x_p), f(x_q)
    Ep_f = (w_p * f_p).mean()
    Eq_f = (w_q * f_q).mean()

    # FISHER
    constraint_F = (w_q * f_q**2).mean() - gamma * Eq_f**2

    # SOBOLEV
    grad_f_q = grad(outputs=Eq_f, inputs=x_q, create_graph=True)[0]
    normgrad_f2_q = (grad_f_q**2).sum(dim=1, keepdim=True)
    constraint_S = (w_q * normgrad_f2_q).mean()

    # Combining FISHER and SOBOLEV constraints
    constraint_tot = (constraint_S + alpha * constraint_F - 1.0)

    obj_f = Ep_f - Eq_f \
            - lambda_aug * constraint_tot - rho/2  * constraint_tot**2

    return -obj_f, Ep_f, Eq_f, normgrad_f2_q
\end{lstlisting}

\section{Architecture of Neural Network discriminator}

\begin{lstlisting}[language=Python, caption={}]
D_mlp = Sequential(
    (L0): Linear(in_features=n_inputs, out_features=n_layers[0], bias=True)
    (N0): ReLU(inplace=True)
    (L1): Linear(in_features=n_layers[0], out_features=n_layers[1], bias=True)
    (N1): ReLU(inplace=True)
    (D1): Dropout(p=0.2, inplace=False)
    (L2): Linear(in_features=n_layers[1], out_features=n_layers[2], bias=True)
    (N2): ReLU(inplace=True)
    (V): Linear(in_features=n_layers[2], out_features=1, bias=False)
)
\end{lstlisting}

\section{Hyperparameters for experiments}

\begin{lstlisting}[language=Python, caption={Hyperparameters for synthetic experiments (Figs.\ \ref{fig:Gauss2MOG}, \ref{fig:cat2heart}, \ref{fig:Gauss2circles}, \ref{fig:disk2heart})}]
{
    "n_layers": [64, 1024, 64],     # Number of neurons in hidden layers of discriminator
    "n_points_src": 4000,           # Number of points sampled from source distribution
    "n_points_target": 4000,        # Number of points sampled from target distribution
    "T": 800,                       # Number descent steps
    "optimizer": Adam(amsgrad=True)     # Optimizer for discriminator (reset at every update of distribution q)
    "batchSize": 512,               # Batch size for discriminator updates
    "n_c_startup": 200,             # Number of steps for discriminator updates at startup
    "n_c": 20,                      # Number of steps for discriminator updates in-between updates of distribution q
    "wdecay": 1e-5,                 # Weight decay factor
    "lrD": 1e-4,                    # Learning rate for discriminator updates
    "lrQ": 1e-4,                    # Learning rate for updates of distribution q
    "tau": 1e-3,                    # Birth-death rate
    "alpha": 0.6,                   # Damping factor ($\alpha$ in Algorithm 3)
    "lambda_aug_init": 1e-5,        # Initialization of augmented Lagrange multiplier (in Algorithm 3)
    "rho": 1e-6                     # Learning rate of augmented Lagrange multiplier
}
\end{lstlisting}

\begin{lstlisting}[language=Python, caption={Hyperparameters for color transfer experiments (Figs.\ \ref{fig:imageColor}, \ref{fig:imageColorsequence})}]
{
    "n_layers": [128, 2048, 128],   # Number of neurons in hidden layers of discriminator
    "n_points_src": 65536,           # Number of points sampled from source distribution
    "n_points_target": 65536,        # Number of points sampled from target distribution
    "T": 800,                       # Number descent steps
    "optimizer": Adam(amsgrad=True)     # Optimizer for discriminator (reset at every update of distribution q)
    "batchSize": 500,               # Batch size for discriminator updates
    "n_c_startup": 300,             # Number of steps for discriminator updates at startup
    "n_c": 5,                      # Number of steps for discriminator updates in-between updates of distribution q
    "wdecay": 1e-5,                 # Weight decay factor
    "lrD": 1e-4,                    # Learning rate for discriminator updates
    "lrQ": 1e-4,                    # Learning rate for updates of distribution q
    "tau": 1e-6,                    # Birth-death rate
    "alpha": 0.3,                   # Damping factor ($\alpha$ in Algorithm 3)
    "lambda_aug_init": 0.0,         # Initialization of augmented Lagrange multiplier (in Algorithm 3)
    "rho": 1e-6                     # Learning rate of augmented Lagrange multiplier
}
\end{lstlisting}

\begin{lstlisting}[language=Python, caption={Hyperparameters for single-cell analysis interpolation experiments (Fig.\ \ref{fig:wot}, \ref{fig:wotsupp})}]
{
    "n_layers": [128, 1024, 64],   # Number of neurons in hidden layers of discriminator
    "n_points_src": 3500,           # Number of points sampled from source distribution
    "n_points_target": 3500,        # Number of points sampled from target distribution
    "T": 400,                       # Number descent steps
    "optimizer": Adam(amsgrad=True)     # Optimizer for discriminator (reset at every update of distribution q)
    "batchSize": 100,               # Batch size for discriminator updates
    "n_c_startup": 300,             # Number of steps for discriminator updates at startup
    "n_c": 5,                      # Number of steps for discriminator updates in-between updates of distribution q
    "wdecay": 1e-5,                 # Weight decay factor
    "lrD": 1e-4,                    # Learning rate for discriminator updates
    "lrQ": 1e-4,                    # Learning rate for updates of distribution q
    "tau": 2e-4,                    # Birth-death rate
    "alpha": 0.2,                   # Damping factor ($\alpha$ in Algorithm 3)
    "lambda_aug_init": 1e-5,         # Initialization of augmented Lagrange multiplier (in Algorithm 3)
    "rho": 1e-6                     # Learning rate of augmented Lagrange multiplier
    "normalization": nn.BatchNorm1d(track_running_stats=False, momentum=0.0)  # Substitutes dropout layer after second hidden layer
}
\end{lstlisting}

\section{Additional Plots}\label{app:plots}
\subsection{Synthetic Examples}\label{app:Synthetic}
We give in Figs \ref{fig:Gauss2circles} and \ref{fig:disk2heart} additional synthetic experiments:
\begin{figure*}[ht!]
\begin{subfigure}{.65\textwidth}
  \centering
  \includegraphics[width=\linewidth]{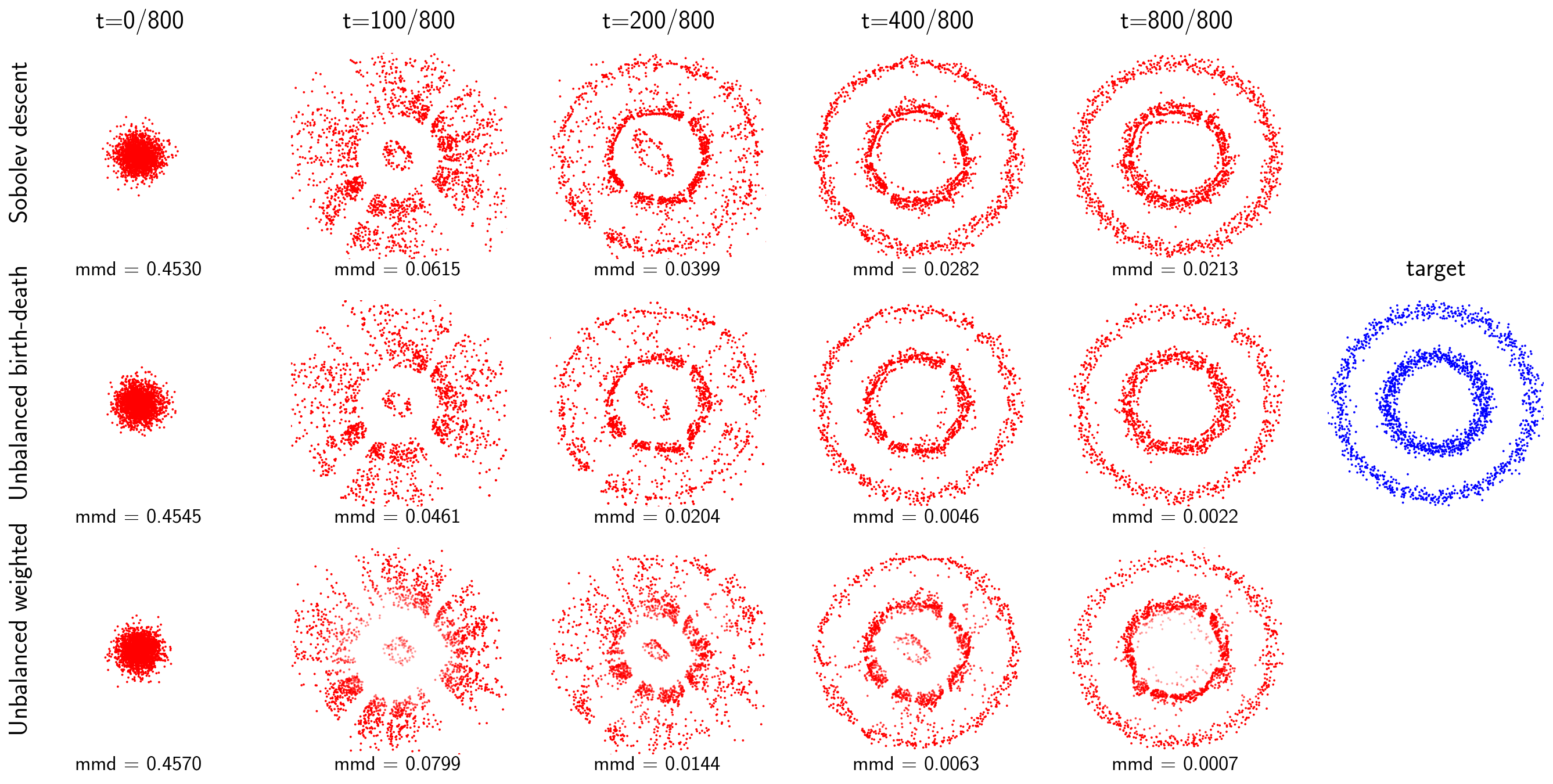}
  \caption{Neural Unbalanced Sobolev Descent paths in transporting a Gaussian to circles). We compare Sobolev descent  (SD, \cite{SD}) to both USD implementations with birth and death processes (bd: Algorithm \ref{alg:NSDDeathBirth}) as well as the weighted version implementation (w: Algoritm \ref{alg:NSD}, note that in this case we overlay the points with their respective weights where coloring density encodes the weights). We see that birth and death processes helps USD to outperform SD in capturing the two modes.}
\end{subfigure}%
\hspace*{0.2in}
\begin{subfigure}{.3\textwidth}
\vspace{0.55in}
\centering
  \includegraphics[width=1.1\linewidth]{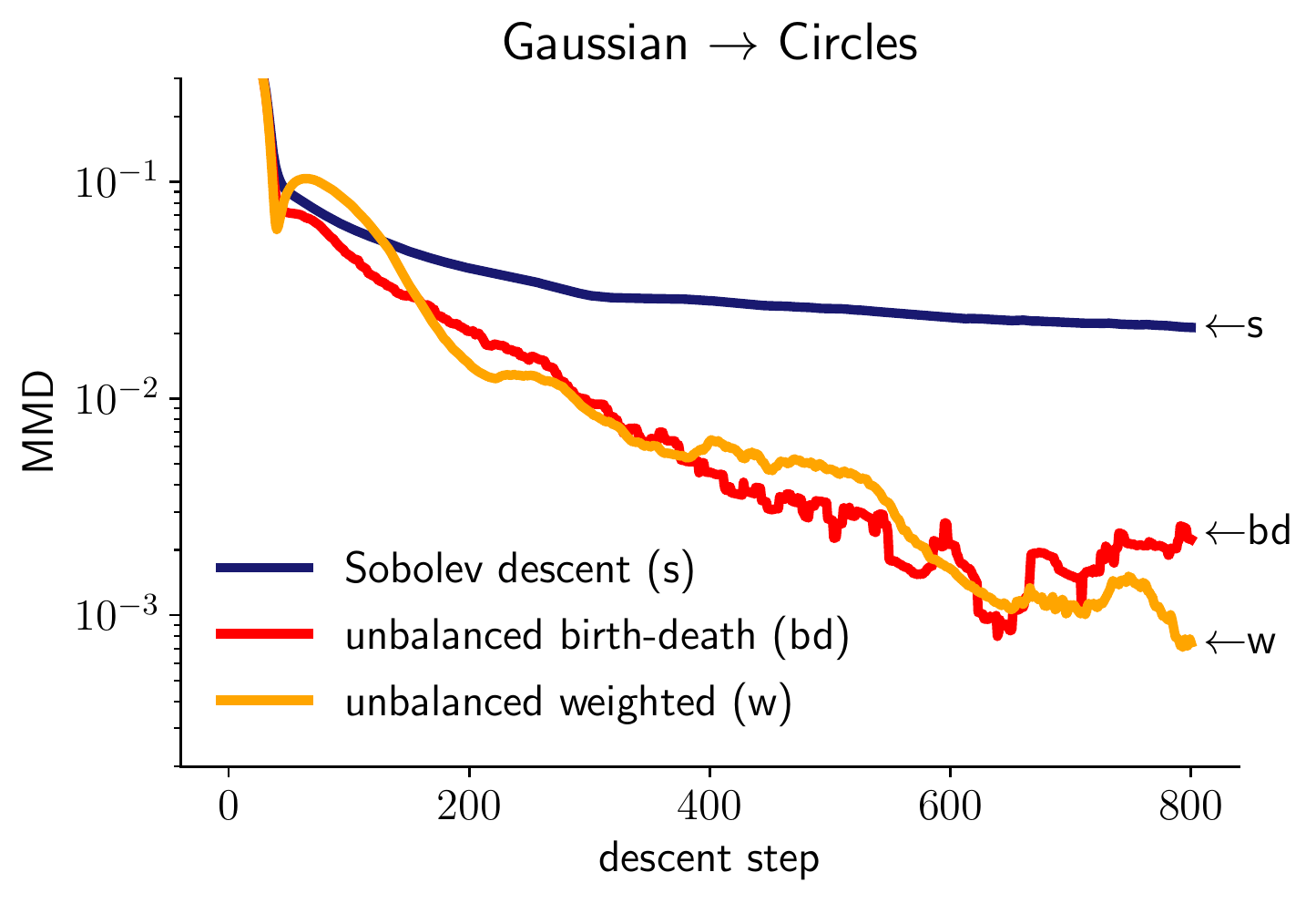}
  \caption{MMD function of the time in the descent from a Gaussian to Circles: We see that birth and death processes in both implementations of USD accelerate the convergence to the target distribution and reaches lower MMD than Sobolev Descent that relies on advection only.}
\end{subfigure}
\caption{Neural Unbalanced Sobolev Descent transporting a Gaussian to circles (target samples have uniform weights, $a_j=\frac{1}{n}$).}
\label{fig:Gauss2circles}
\vskip -0.1in
\end{figure*}

\begin{figure*}[ht!]
\begin{subfigure}{.65\textwidth}
  \centering
  \includegraphics[width=\linewidth]{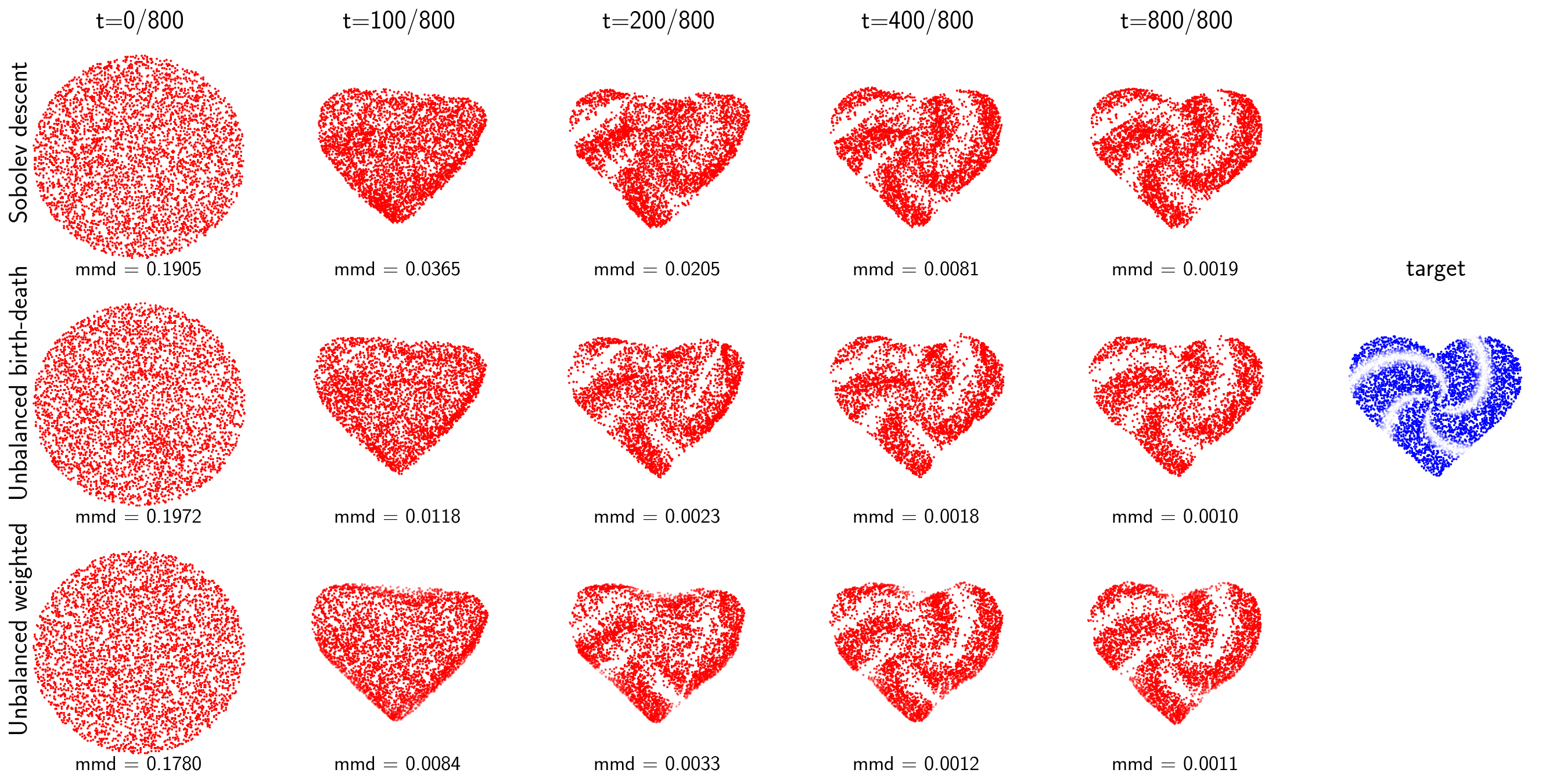}
  \caption{Neural Unbalanced Sobolev Descent paths in transporting a disk to a heart/spiral. We compare Sobolev descent  (SD, \cite{SD}) to both USD implementations with birth and death processes (bd: Algorithm \ref{alg:NSDDeathBirth}) as well as the weighted version implementation (w: Algoritm \ref{alg:NSD}, note that in this case we overlay the points with their respective weights where coloring density encodes the weights). We see that birth and death processes helps USD to outperform SD in capturing the two modes.}
\end{subfigure}%
\hspace*{0.2in}
\begin{subfigure}{.3\textwidth}
\vspace{0.55in}
\centering
  \includegraphics[width=1.1\linewidth]{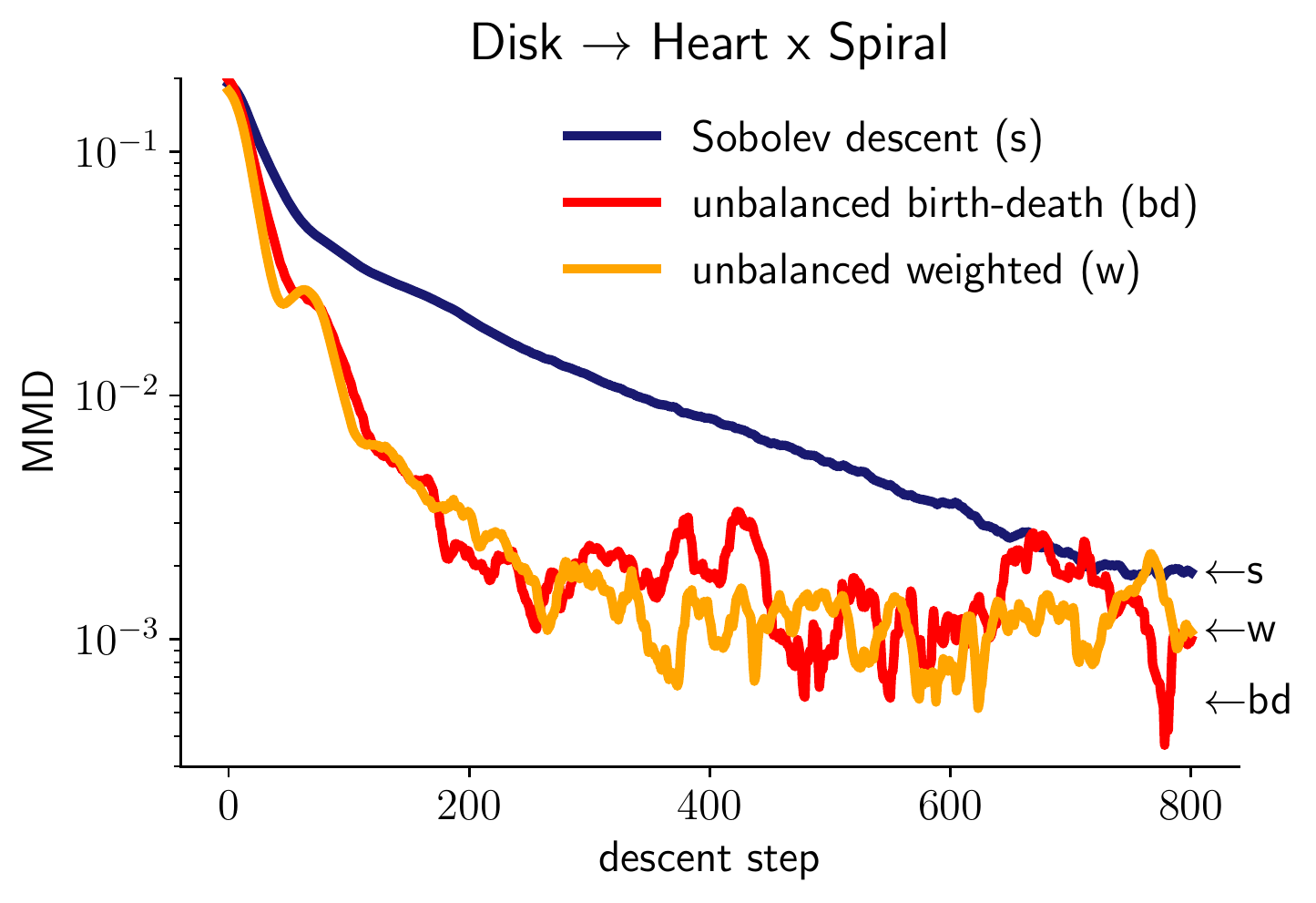}
  \caption{MMD function of the time in the descent from a disk to a heart/spiral: We see that birth and death processes in both implementations of USD accelerate the convergence to the target distribution and reaches lower MMD than Sobolev Descent that relies on advection only.}
\end{subfigure}
\caption{Neural Unbalanced Sobolev Descent transporting a `disk' to a `heart' weighted by a spiral-shaped gradient.}
\label{fig:disk2heart}
\vskip -0.1in
\end{figure*}

\subsection{Image Coloring}\label{app:color}
We give in Figure \ref{fig:imageColorsequence}, the trajectories of the descent in image color transfer experiment. 
\begin{figure*}[ht!]
\begin{subfigure}{\textwidth}
  \centering
  \includegraphics[width=\linewidth]{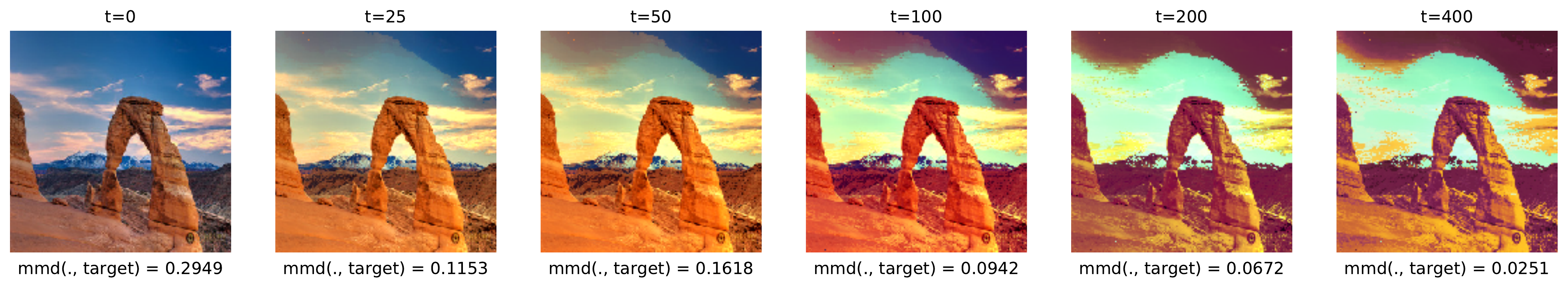}
\end{subfigure}%

\begin{subfigure}{\textwidth}
\centering
  \includegraphics[width=\linewidth]{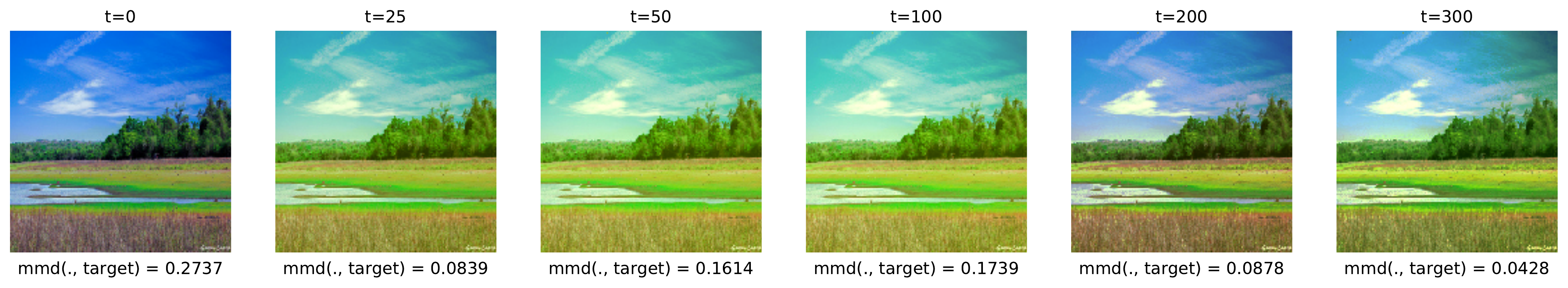}
\end{subfigure}
\caption{ Color Transfer with USD using (bd) Algorithm \ref{alg:NSDDeathBirth}. Trajectories of the descent.}
\label{fig:imageColorsequence}
\end{figure*}

\subsection{Comparisons to Waddington Optimal Transport for single-cell analysis}
We give in Figure \ref{fig:wotsupp} the evolution of the MMD as function of the day of interpolation using USD and unbalanced OT as in WOT.
\begin{figure}[ht!]
    \includegraphics[width=\textwidth]{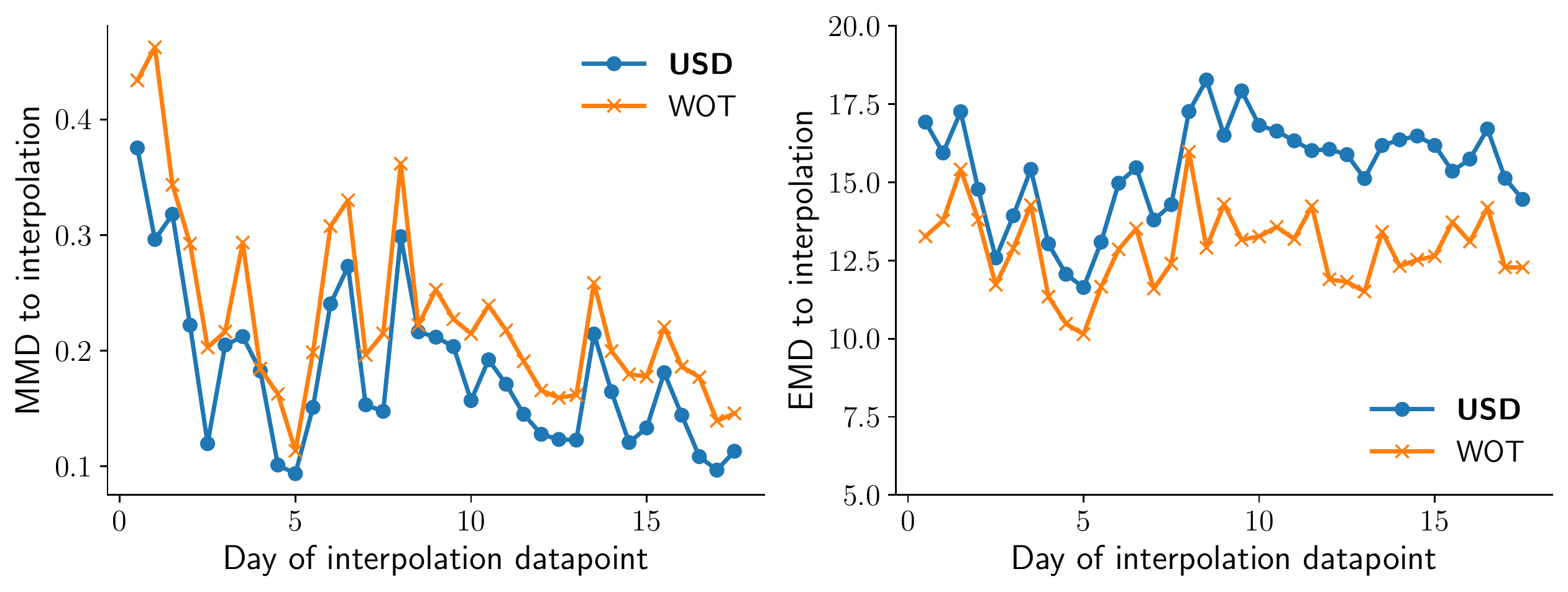}
    \caption{MMD and EMD between predicted mid points (using USD and WOT)  and their respective ground truths as function of the day of interpolation. }
    \label{fig:wotsupp}
\end{figure}

\end{document}